\def\bm{\pmb{m}}
\def\bp{\pmb{p}}
\def\bq{\pmb{q}}
\def\br{\pmb{r}}
\def\bu{\pmb{u}}
\def\bw{\pmb{w}}
\def\bx{\pmb{x}}
\def\by{\pmb{y}}
\def\bz{\pmb{z}}
\def\bone{\pmb{1}}
\def\bbO{\mathbb{O}}
\def\bbR{\mathbb{R}}
\def\cR{\mathcal{R}}
\def\wtd{\widetilde}
\DeclareMathOperator{\diag}{diag}
\DeclareMathOperator{\kry}{kry}
\DeclareMathOperator{\opt}{opt}
\DeclareMathOperator{\rank}{rank}
\DeclareMathOperator{\raw}{raw}
\DeclareMathOperator{\tr}{tr}
\DeclareMathOperator{\st}{s.t.}
\DeclareMathOperator{\T}{T}
\def\scrA{\mathscr{A}}
\def\scrB{\mathscr{B}}
\def\scrX{\mathscr{X}}
\def\algres{{\tt res}}
\def\tol{{\tt tol}}
\def\nkry{n_{\kry}}
\newtheorem{theorem}{Theorem}
\newtheorem{lemma}{Lemma}[section]
\theoremstyle{definition}
\title[OMvA]{Orthogonal Multi-view Analysis by Successive Approximations via Eigenvectors}
\author[L. Wang] {Li Wang}
\author[L. Zhang]{Lei-Hong~Zhang}
\author[C. Shen]{Chungen~Shen}
\author[R. Li]{Ren-Cang~Li}
\thanks{
Li Wang is with Department of Mathematics and
Department of Computer Science and Engineering,
University of Texas at Arlington, Arlington, TX 76019-0408, USA. Email: li.wang@uta.edu. Corresponding Author.\\
Lei-Hong Zhang is with School of Mathematical Sciences and Institute of Computational Science, Soochow University, Suzhou 215006, Jiangsu, China. Email: longzlh@suda.edu.cn.\\
Chungen~Shen is with College of Science, University of Shanghai for Science and Technology, Shanghai 200093, China. Email: shenchungen@usst.edu.cn.\\
Ren-Cang Li is with Department of Mathematics,
University of Texas at Arlington, Arlington, TX 76019-0408, USA.
Email: rcli@uta.edu.
}
\begin{document}
\maketitle

\begin{abstract}
We propose a unified framework  for multi-view subspace learning to learn individual orthogonal projections for all views. The framework  integrates the correlations within multiple views, supervised discriminant capacity, and distance preservation in a concise and compact way.
It not only includes several existing models as special cases, but also inspires new novel models. To demonstrate its versatility to handle different learning scenarios, we showcase three new multi-view discriminant analysis models and two new multi-view multi-label classification ones under this framework. An efficient numerical method based on successive approximations via eigenvectors is presented to solve the associated optimization problem.
The method is built upon an iterative Krylov subspace method which can easily scale up for high-dimensional datasets.  Extensive experiments are conducted on various real-world datasets for multi-view discriminant analysis and  multi-view multi-label classification. The experimental results demonstrate that the proposed models
are consistently competitive to and often better than the compared methods that do not learn orthogonal projections.
\end{abstract}

\section{Introduction} Multi-view data are increasingly collected for a variety of applications in the real world. They usually contain complementary, redundant, and corroborative contents and so are more informative than single-view data when it comes
to characterize objects of the real-world.
It is rather natural for human beings to perceive the world through comprehensive information collected by multiple sensory organs, but it is an open question on how to endow machines with analogous cognitive capabilities to do the same.
One of the fundamental challenges is how to represent and summarize multi-view data in such a way
that comprehensive information concealed in multi-view data can be properly exploited by multi-view learning models.

The heterogeneity gap \cite{peng2019cm} among multiple views makes it challenging to construct such representations since features extracted from different views with similar semantics may be located in completely different subspaces, e.g., text is often symbolic while audio and image are signals.
A significant research effort has been about narrowing this gap by seeking a common semantic subspace into which  the heterogeneous features from different views are projected.

Multi-view subspace learning, as the most popularly studied methodology for multi-view learning \cite{zhao2017multi,xu2013survey},
aims to narrow the heterogeneity gap under the assumption that
all views are generated
from a common latent space via some unknown transformations in the first place.  The most representative subspace learning model is the canonical correlation analysis (CCA)  \cite{hote:1936}, which was originally proposed to learn two linear projections by maximizing the cross-correlation between two views in a common space. It has since been extended to more than two views \cite{nielsen2002multiset}, nonlinear projections via either kernel representation \cite{hardoon2004canonical} or deep representation \cite{andrew2013deep},  supervised learning \cite{sharma2012generalized,cao2017generalized,sun2015multiview}, and multi-output learning such as multi-label classification \cite{sun2010canonical} and multi-target regression \cite{wang2020orthogonal}.

Recent researches have demonstrated
that orthogonality built  into single-view subspace learning models admits desirable advantages
such as more noise-tolerant, better suited for data visualization and distance preservation \cite{jolliffe1986principal,ye2005characterization,wang2007trace,zhln:2010,kokiopoulou2007orthogonal,cai2005orthogonal}.
An orthogonal projection preserves the pairwise distance so long as the vectors to be projected live
in the range of the projection.  Distance preservation, as one of the most important learning criteria, has also been successfully
demonstrated in  learning methods such as kernel learning \cite{weinberger2004learning} and density estimation \cite{wang2017latent}.

Orthogonality has been successfully explored in multi-view subspace learning,  including orthogonal CCA (OCCA) \cite{wang2020orthogonal,shsy:2013,cugh:2015,Zhang2020Self},  orthogonal multiset CCA (OMCCA) \cite{Zhang2020Self,shen2015orthogonal}, and  multi-view partial least squares (PLS) \cite{wang2020scalable}. However, most multi-view subspace learning methods stay clear from  orthogonality constraints for two technical obstacles:
\begin{enumerate}[1)]
	\item adding orthogonality constraints may cause incompatibility to inherent constraints already there in existing models, and
	\item even if there is no incompatibility issue, the resulting optimization problem is generally hard to solve. 
\end{enumerate}
Generic optimization methods are often too slow even for datasets of modest scale and practically infeasible for high dimensional data. As a result, most existing learning methods \cite{sharma2012generalized,cao2017generalized,hu2019multi} resort to solving certain related relaxed problems of their original formulations as generalized eigenvalue problems, for which well-developed numerical linear algebra techniques can be readily deployed to handle high-dimensional datasets
but at a price of degrading learning performance.

This issue has been previously  studied in the case of the trace ratio formulation {\em vs.} the ratio trace formulation
for single-view dimensionality reduction  in the context of linear discriminant analysis (LDA). Authors in \cite{wang2007trace} argued that the trace ratio formulation with the orthogonality constraint is essential and
can lead to superiority over the ratio trace formulation which is a relaxation
of the trace ratio formulation as a generalized eigenvalue problem.
The cross-correlation between two views in CCA is inherently defined as a trace ratio formulation \cite{hote:1936}. Moreover, the objective function of the trace ratio formulation is invariant under any orthogonal transformation, which is more beneficial to classification and clustering in the reduced space than the ratio trace formulation that is invariant under any non-singular transformation. This motivates the study of orthogonal LDA (OLDA) \cite{wang2007trace,zhln:2010} and orthogonal CCA (OCCA) \cite{cugh:2015}. However, no orthogonal extension to supervised multi-view subspace learning has yet been explored.

Our goals in this paper are twofold. We will propose a unified framework for orthogonal multi-view analysis
to resolve the obstacle 1) we previously pointed out. Specifically, we take the trace ratio formulation to model the pairwise correlations of multiple views by strictly following their original definitions. Various supervised information can be incorporated into the numerators or denominators of the trace ratios  in order to capture the class separability or coherence. Orthogonality constraints are added without causing any incompatibility issue. All three ingredients are integrated together in a concise and consistent way by the proposed framework. However,
the resulting optimization problem  is a  challenging one. 
That is the obstacle 2) we previously pointed out. Instead of solving the optimization problem
as it is, we propose an efficient optimization method called orthogonal successive approximation via eigenvectors (OSAVE) to calculate an approximate solution. 

\noindent\textbf{Contributions.} The main contributions of this paper are summarized as follows:
\begin{itemize}
	\item We propose a unified multi-view subspace learning framework, which can naturally integrate the dependency among multiple views, supervised information, and simultaneously learn orthogonal projections in a concise and compact formulation. OLDA, OCCA and OMCCA are special cases of the proposed framework.
	
	\item Our framework can be flexibly adapted for various learning scenarios. To justify the flexibility, we instantiate several new models from the proposed framework.
	Three models are proposed for multi-view feature extraction, and two models  for multi-view multi-label classification. Different from existing ones, our models are directly built on the essential trace ratio formulation with orthogonality
	constraints.
	
	\item To solve the challenging optimization problem of the proposed framework, we present a successive approximation algorithm, which is built upon well-developed numerical linear algebra techniques. We describe an iterative Krylov subspace method for calculating the top eigenvector of generalized eigenvalue problem
	$A\bx=\lambda B\bx$ with possibly a singular $B$. The Krylov subspace method can serve as the workhorse for scalability.
	
	\item Extensive experiments are conducted for evaluating the proposed models against existing learning methods in terms of two learning tasks: multi-view feature extraction and multi-view multi-label classification. Experimental results  on various real-world datasets demonstrate that our proposed models perform competitively to and often better than baselines.
\end{itemize}

\noindent\textbf{Paper organization.} We first describe the background of this work from single- and multi-view feature extraction and briefly review the relevant existing models in Section \ref{sec:related-work}. In Section \ref{sec:omva}, we propose the novel unified framework for orthogonal multi-view analysis, and their instantiated models for multi-view discriminant analysis and multi-view multi-label classification. The proposed successive approximation algorithm is presented in Section \ref{sec:alg4OMDA} with its key component in Section \ref{sec:geig}. Extensive experiments are conducted in Section \ref{sec:experiments}. Finally, we draw our conclusions in Section \ref{sec:conclusions}.

\noindent\textbf{Notation.}
$\bbR^{m\times n}$ is the set of $m\times n$ real matrices and $\bbR^n=\bbR^{n\times 1}$. $I_n\in\bbR^{n\times n}$ is
the identity matrix of size $n \times n$, and $\bone_n\in\bbR^n$ is the vector of all ones. $\|\bx\|_2$
is the 2-norm of a vector $\bx\in \bbR^n$. For $B\in\bbR^{m\times n}$,
$\cR(B)$ is the column subspace.
$B\succ 0 (\succeq 0)$ means that $B$ is symmetric positive definite (semi-definite). 
The Stiefel manifold
\begin{align}
\bbO^{n\times k}=\{X\in\bbR^{n\times k}\,:\, X^{\T}X=I_k\}
\end{align}
is an embedded submanifold of $\bbR^{n\times k}$ endowed with the standard inner product $\langle X,Y \rangle=\tr(X^{\T}Y)$ for $X,Y\in \bbR^{n\times k}$,
where $\tr(X^{\T}Y)$ is the trace of $X^{\T}Y$.

\section{Background and Related Work} \label{sec:related-work}
Feature extraction is an important tool for multivariate data analysis. A number of methods have been proposed in the literature. In what follows, we will review a large family of feature extraction methods for learning linear transformations and then explain their characteristics.

\subsection{Problem Setup}
We start by a general setup for  feature extraction learning on
data of multiple views  and their class labels, and then explain their  representations
in a common space.

Let $\{ (\bx_i^{(1)},\ldots,\bx_i^{(v)}, \by_i) \}_{i=1}^n$ be a dataset of $v$ views, where the $i$th data points $\bx_i^{(s)} \in \bbR^{d_s}$ of all views ($1\le s\le v$) are assumed to share the same class labels in $\by_i$
of $c$ labels.

The labels can have different interpretations, dependent of the underlying learning task.
For multi-output regression, $\by_i \in \bbR^{c}$, and it reduces to a scalar  for the classical regression for which $c=1$. For multi-label classification, $\by_i \in \{0, 1\}^{c}$ with an understanding that the $i$th data points of all views
have the class label $r$ if  $(\by_i)_r = 1$ and otherwise $0$, where $(\by_i)_r$ is the $r$th entry of $\by_i$.
If $\bone_c^{\T} \by_i   = 1$, then multi-label classification becomes a problem of $c$-class classification
since one and only one class label is assigned to each instance of data points of all views.
In particular, if $c=2$ and $\bone_c^{\T} \by_i  = 1$, then it is just the binary classification.

For the purpose of feature extraction learning,
objective fulfilling linear transformations are sought to extract the latent representation for each view.
Let $P_s \in \bbR^{d_s \times k}$ be the projection matrix for view $s$ to transform $\bx_i^{(s)}$ from $\bbR^{d_s}$
to $\bz_i^{(s)} = P_s^{\T} \bx_i^{(s)}$ in the common space $\bbR^{k}$. Represent the $n$ data points of view  $s$ by
$X_s = [\bx_1^{(s)},\ldots,\bx_n^{(s)}] \in \bbR^{d_s \times n}$ and its latent representation by
$Z_s = [\bz_1^{(s)} ,\ldots,\bz_n^{(s)} ] = P_s^{\T} X_s \in \bbR^{k \times n}$.
Accordingly, we denote the centered matrix and the sample mean of view $s$, and the label matrix by
\begin{align}\label{eq:c_s}
\widehat{X}_s = X_s H_s, \,\,
\bm_s = \frac{1}{n} X_s \bone_n,\,\,
Y = [\by_1,\ldots,\by_n],
\end{align}
respectively, where $H_n = I_n - \frac{1}{n} \bone_n \bone_n^{\T}$.

The sample cross-covariance between view $s$ and view $t$ is given by
\begin{align}
C_{s,t} = \frac{1}{n} X_s H_n X_t^{\T}. \label{eq:c_st}
\end{align}
In particular, $C_{s,s}$ is the covariance of view $s$.
It is not so hard to  verify that 
\begin{align*}
\widehat{X}_s &= X_s - \bm_s \bone^{\T} = [\bx_1^{(s)} - \bm_s, \ldots, \bx_n^{(s)} - \bm_s],\\
C_{s,t} &= \frac{1}{n} \widehat{X}_s \widehat{X}_t^{\T} = \frac{1}{n} \sum_{i=1}^n (\bx_i^{(s)} - \bm_s) (\bx_i^{(s)} - \bm_s)^{\T},
\end{align*}
upon noticing $H_n^2 = H_n$. 

For the $c$-class classification, i.e., $Y \in \{0,1\}^{c\times n}$ and $\bone_c^{\T} \by_i = 1$,
we have the following properties:
\begin{align}
Y^{\T} \bone_c = \bone_n,  \Sigma =YY^{\T} = \diag(n_1,\ldots,n_c),
\end{align}
where $n_r = \sum_{i=1}^n (\by_i)_r$ is the number of data points in class $r$.
Denote by $\bu_r^{\T}$ the $r$th row of $Y$, and define
\begin{align}
\bm_s^r := \frac{1}{n_r}X_s \bu_r, \forall r,
\end{align}
the mean of all data points in view $s$ having class label $r$. So, we have $\sum_{r=1}^c \bu_r = Y^{\T} \bone_c = \bone_n$.
The between-class scatter matrix $S_b^s$ can be written as
\begin{align}
S_b^s =  X_s (Q - \frac{1}{n} \bone_n \bone_n^{\T}) X_s^{\T}, \label{eq:s_b}
\end{align}
where
$$
Q = Y^{\T} \Sigma^{-1} Y = \sum_{r=1}^c \frac{1}{n_r} \bu_r \bu_r^{\T}.
$$
To see (\ref{eq:s_b}), we note, by definition, that
\begin{align*}
S_b^s &= \sum_{r=1}^c n_r \bm_s^r (\bm_s^r)^{\T} \!-\! n \bm_s \bm_s^{\T} \\
&=\sum_{r=1}^c n_r  ( \bm_s^r  \!-\! \bm_s) ( \bm_s^r  \!-\! \bm_s)^{\T},
\end{align*}
where we have used $$\sum_{r=1}^c n_r \bm_s^r = \sum_{r=1}^c X_s \bu_r = X_s \bone_n = n \bm_s$$ and $\sum_{r=1}^c n_r = n$.
Since $S_w^s  = n C_{s,s} - S_b^s$, the within-class scatter matrix takes the form
\begin{align}
S_w^s = X_s (I - Q) X_s^{\T}. \label{eq:s_w}
\end{align}

A graph Laplacian is  a matrix representation of a graph and popularly used to approximate the manifold structure of data via locality information encoded by the edge weights of the undirected  graph. Denote $G_s=(V_s, E_s, W_s)$ built for $X_s$, where $V_s=\{1, \ldots, n\}$ is the graph nodes and $W_s=[w^{(s)}_{i,j}] \in \bbR^{n\times n}$ is symmetric with its entry $w^{(s)}_{i,j}$ being the weight of edge $(i,j) \in E_s$.
The graph Laplacian matrix of $G_s$ is defined as $L_s = D_s - W_s$, where $D_s = \diag(W_s \bone_n)$.
The manifold approximation is captured by
\begin{align}
\frac{1}{2} \sum_{i=1}^n \sum_{j=1}^n w_{i,j}^{(s)} \| \bz_i^{(s)} - \bz_j^{(s)} \|^2 = \tr(P_s^{\T} X_s L_s X_s^{\T} P_s). \label{eq:graph-reg}
\end{align}
By minimizing (\ref{eq:graph-reg}) with respect to $P_s$,
the optimal projection matrix $P_s$
satisfies the following criterion: if $\bx_i^{(s)}$ is close to $\bx_j^{(s)}$, i.e.,
the similarity $w_{i,j}^{(s)}$ is large, the distance between two corresponding
projected points, $\bz_i^{(s)}$ and $\bz_j^{(s)}$, is also small in the projected space.

We point out that $S_b^s$ and $S_w^s$ in (\ref{eq:s_b}) and (\ref{eq:s_w}) can be expressed in
terms of some graph Laplacians as
\begin{alignat}{2}
S_b^s &= - X_s L_s X_s^{\T}, &\quad\quad W_s &=  Q - \frac{1}{n} \bone_n \bone_n^{\T} , \tag{\ref{eq:s_b}$'$} \\
S_w^s &= X_s L_s X_s^{\T}, &\quad\quad W_s &= Q. \tag{\ref{eq:s_w}$'$}
\end{alignat}
In fact, $S_w^s$ of (\ref{eq:s_w}$'$) was used in the Fisher score for supervised feature selection \cite{he2006laplacian}, and both $S_b^s$ and $S_w^s$ here were used in \cite{sugiyama2007dimensionality} with modified weight matrices to incorporate local information.

\subsection{Single-view Feature Extraction}
A series of single-view (i.e., $v=1$) feature extraction methods  that learn a linear transformation matrix have been proposed. Principal component analysis (PCA) seeks the directions of the input space so that the variance of the projected data is maximized. The joint optimization to obtain transofrmation matrix $P_1$ is formulated as
\begin{align}
\max_{P_1 \in \bbR^{d_1 \times k}} \tr(P_1^{\T} C_{1,1} P_1) : \st~ P_1^{\T} P_1 = I_k. \label{op:pca}
\end{align}
PCA is an unsupervised method since it does not incorporate any output data, and so PCA projections may not be
consistent with output data.

For multi-class classification, linear discriminant analysis (LDA) incorporates output labels to search a projection matrix so that the within-class scatter is minimized while the between-class scatter is maximized. The commonly used LDA formulation is
\begin{align}
\max_{P_1 \in \bbR^{d_1 \times k}} \tr(P_1^{\T} S_b^1 P_1) : \st~ P_1^{\T} N P_1 = I_k. \label{op:lda-rt}
\end{align}
It is popular, in large part because it admits an analytic solution as a generalized eigenvalue problem, where $N$ is either $S_w^1$ or
$C_{1,1}$.
Another approach is the trace ratio formulation for the same purpose
\begin{align}
\max_{P_1 \in \bbR^{d_1 \times k}} \frac{\tr(P_1^{\T} S_b^1 P_1)}{\tr(P_1^{\T} M P_1)} : \st~ P_1^{\T} N P_1 = I_k, \label{op:lda-tr}
\end{align}
where
$M$ and $N$
can be one of $C_{1,1}$, $S_w^1$ and $I_n$. Some of the interesting combinations are as follows:
\begin{enumerate}
	\item (\ref{op:lda-tr}) with $M=C_{1,1}$ \cite{wang2007trace} or with $M=S_w^1$ \cite{zhln:2010}  to obtain an orthonormal projection matrix by letting $N=I_{d_1}$,
	
	\item (\ref{op:lda-tr}) with $M=S_w^1$ and $N=C_{1,1}$  to learn uncorrelated directions \cite{nie2009semi,zhan:2011},
	
	\item (\ref{op:lda-tr}) with  $M=C_{1,1}$ and $N=S_w^1$, equivalent to (\ref{op:lda-rt}) since $n C_{1,1} = S_w^1  + S_b^1$,
	\item (\ref{op:lda-tr}) with $M=N\not=I_{d_1}$, reducing to (\ref{op:lda-rt}).
\end{enumerate}
Another special case is with $M=N=I_{d_1}$ to give
\begin{align}
\max_{P_1 \in \bbR^{d_1 \times k}} \tr(P_1^{\T} S_b^1 P_1) : \st P_1^{\T} P_1 = I_k.
\end{align}

For multi-output regression and multi-label classification, partial least squares (PLS) and canonical correlation analysis (CCA) are two common approaches that also learn projection matrices for output data. PLS looks for a projection matrix that maximizes the cross-covariance between the projected input and output:
\begin{subequations}\label{op:pls}
	\begin{align}
	\max_{P_1 \in \bbR^{d_1 \times k}, P_Y \in \bbR^{c \times k}} &\tr(P_1^{\T} X_1 H_n Y^{\T} P_Y) \label{op:pls-1}\\
	\st &~ P_1^{\T} P_1 = P_Y^{\T} P_Y = I_k, \label{op:pls-2}
	\end{align}
\end{subequations}
where $P_Y \in \bbR^{c\times k}$ is the projection matrix for the output.
CCA also maximizes the cross-correlation
\begin{subequations}\label{op:cca}
	\begin{align}
	\max_{P_1 \in \bbR^{d_1 \times k}, P_Y \in \bbR^{c \times k}}  & \tr(P_1^{\T} X_1 H_n Y^{\T} P_Y) \label{op:cca-1}\\
	\st &~ P_1^{\T} C_{1,1} P_1 = P_Y^{\T} Y Y^{\T} P_Y = I_k, \label{op:cca-2}
	\end{align}
\end{subequations}
except that they have different constraints.
Manifold learning can be used for feature extraction in both supervised and unsupervised settings depending on how the graph is constructed. In \cite{yan2006graph}, a general framework called graph embedding \cite{yan2006graph} is formulated as
\begin{subequations}\label{op:ge}
	\begin{align}
	\min_{P_1} & \tr(P_1^{\T} X_1 L_1 X_1^{\T} P_1) \label{op:ge-1}\\
	\st &~ P_1^{\T} X_1 B X_1^{\T} P_1= I_k \textrm{ or } P_1^{\T} P_1 = I_k, \label{op:ge-2}
	\end{align}
\end{subequations}
where $B$ is to be specified.
According to (\ref{eq:s_b}$'$) and (\ref{eq:s_w}$'$), PCA (\ref{op:pca}) and LDA (\ref{op:lda-rt}) are special cases of (\ref{op:ge}), but the trace ratio formulation (\ref{op:lda-tr}) is not. Locality preserving projection (LPP) \cite{he2004locality} is  (\ref{op:ge}) with graph Laplacian matrix $L_1 = D_1 - W_1$, and
$B=\diag(W_1 \bone_n)$, while Laplacian eigenmap (LE) \cite{belkin2002laplacian}  solves LPP directly for $P_1^{\T} X_1$ instead of $P_1$. As stated in \cite{yan2006graph}, locally linear embedding (LLE) \cite{roweis2000nonlinear} and ISOMAP \cite{tenenbaum2000global} are also spacial cases of (\ref{op:ge}).

\subsection{Multi-view Feature Extraction}\label{ssec:MvFE}
As multiple inputs may come from different sources (views), they are most likely heterogeneous and have large discrepancy. The aim of multi-view feature extraction is to exploit consensual, complementary, and overlapping information among different views.

PLS \eqref{op:pls} and CCA \eqref{op:cca} can be directly applied to two-view data ($v=2$) simply by replacing
$Y$ and $P_Y$ in (\ref{op:pls}) or (\ref{op:cca}) with $X_2$ and $P_2$ of view 2, respectively. For $v > 2$, the multi-set CCA (MCCA) \cite{nielsen2002multiset}
\begin{subequations} \label{op:mcca}
	\begin{align}
	\max_{\{P_s \in \bbR^{d_s \times k}\}}  \sum_{s=1}^v \sum_{t=1}^v \tr(P_s^{\T} C_{s,t} P_t) \\ \st~ \sum_{s=1}^v P_s^{\T} C_{s,s} P_s = I_k, 
	\end{align}
\end{subequations}
is the most popularly used, chiefly due to its analytic solution via the generalized eigen-decomposition that
has been well studied \cite{bddrv:2000,govl:2013}. Orthogonal multiset CCA (OMCCA)
\begin{subequations}\label{op:omcca}
	\begin{align}
	\max_{\{P_s \in \bbR^{d_s \times k}\}} & \sum_{s=1}^v \sum_{t=1}^v \frac{ \tr(P_s^{\T} C_{s,t} P_t) }
	{\sqrt{\tr(P_s^{\T} C_{s,s} P_s )\vphantom{P_t^{\T}}}
		\sqrt{\tr(P_t^{\T} C_{t,t} P_t )}} \label{op:omcca-1}\\
	\st &~ P_s^{\T} P_s = I_k, \forall s \label{op:omcca-2}
	\end{align}
\end{subequations}
is proposed in \cite{Zhang2020Self}. Its special case $v=2$ is the orthogonal CCA (OCCA) \cite{wang2020orthogonal,shsy:2013,cugh:2015}. In \cite{shen2015orthogonal},
a variant of \eqref{op:omcca} was studied.
The key in (\ref{op:mcca}) and (\ref{op:omcca}) is the use of pairwise cross-covariance matrices $\{C_{s,t}\}$ to capture the consensus among the $v$ views.

Recently, PLS is extended for $v>2$ in \cite{wang2020scalable}, too, where the orthogonality constraints $P_s^{\T} P_s = I_k$ for all $s$ are imposed.

For supervised learning, the output label $Y$ can be naturally considered as one input view \cite{sun2010canonical}. However, the special structure of label information is neglected.
To compensate that negligence and to take full advantage of label data, sophisticated multi-view feature extraction methods have been proposed. In \cite{sharma2012generalized}, generalized multi-view analysis (GMA) is formulated, by integrating LDA (or some variants of it) and CCA, as
\begin{subequations}\label{op:gma}
	\begin{align}
	\max_{\{P_s\}} & \sum_{s=1}^v \tr(P_s^{\T} S_b^s P_s) +  \sum_{s=1}^v \sum_{t=1, t \not=s}^v \alpha_{s,t} \tr(P_s^{\T} C_{s,t} P_t) \label{op:gma-1}\\
	\st &~ P_s^{\T} S_w^s P_s = I_k, \forall s, \label{con:gma}
	\end{align}
\end{subequations}
where $\alpha_{s,t}$ is the weight for cross-covariance between view $s$ and view $t$.
Unfortunately, this is a difficult optimization problem whose KKT condition leads to a multi-parameter eigenvalue problem like \eqref{eq:MDA-eig2} later
for which there is no efficient numerical method for its solution. For that reason,
authors in \cite{sharma2012generalized} proposed to solve, instead, a relaxed problem: the same objective but a  constraint different from (\ref{con:gma}):
\begin{align}
\sum_{s=1}^v \gamma_s P_s^{\T} S_w^s P_s = I_k, \label{eq:gma-con}
\end{align}
resulting in a generalized eigenvalue problem \cite{govl:2013}, where $\gamma_s$ are parameters to balance $v$ independent constraints. $S_b^s$ and $S_w^s$ can be the ones in (\ref{eq:s_b}$'$) and (\ref{eq:s_w}$'$) for the classical LDA, or those in \cite{sugiyama2007dimensionality,yan2006graph}. Multi-view uncorrelated linear discriminant analysis (MULDA) \cite{sun2015multiview} was proposed to replace (\ref{con:gma}) with the uncorrelated constraints
\begin{align} \label{con:mulda}
\sum_{s=1}^v \gamma_s P_s^{\T} C_{s,s} P_s = I_k.
\end{align}
Multi-view modular discriminant analysis (MvMDA) \cite{cao2017generalized} aims to maximize the distances between different class centers across different views and minimize the within-class scatter
\begin{subequations}\label{op:mvmda}
	\begin{align}
	\max_{\{P_s\}}  & \sum_{s=1}^v \sum_{t=1}^v \tr(P_s^{\T} X_s A X_t^{\T} P_t) \\
	\st & \sum_{s=1}^v P_s^{\T} S_w^s P_s \!=\! I_k, \label{op:mvmda-2}
	\end{align}
\end{subequations}
where $A = Y^{\T} \Sigma^{-1} H_c \Sigma^{-1} Y$.

It is worth noting that imposing orthogonality constraints  has attracted much attention in multi-view feature extraction
in unsupervised learning, but it is seldom explored in supervised learning.
In addition, it has been widely studied in single-view methods in both unsupervised and supervised learning.

\section{Orthogonal Multi-view Analysis} \label{sec:omva}
In this section, we propose a novel unified framework for multi-view discriminant analysis in order to learn orthogonal projections onto a latent common space.

\subsection{Motivation}
An orthogonal projection is able to preserve the pairwise distance if the vectors to be projected live
in the range of the projection. Specifically, if $\bx_i^{(s)}\in\cR(P_s)$ for all $i$ and $P_s^{\T} P_s=I_k$, then we have
$\bx_i^{(s)}=P_s\wtd\bz_i^{(s)}$ for some $\wtd\bz_i^{(s)}$ and
$$
P_s \bz_i^{(s)}=P_s(P_s^{\T} \bx_i^{(s)})=P_s\underbrace{P_s^{\T}P_s}\wtd\bz_i^{(s)} =
P_s\wtd\bz_i^{(s)}=\bx_i^{(s)}.
$$
Now, the pairwise Euclidean distance between $\bx_i^{(s)}$ and $\bx_j^{(s)}$
\begin{align}
\!\| \bx_i^{(s)} - \bx_j^{(s)} \|^2 = \| P_s (\bz_i^{(s)}  - \bz_j^{(s)})  \|^2 = \| \bz_i^{(s)} - \bz_j^{(s)} \|^2
\end{align}
is preserved in the projected space. Distance preservation as an important learning criterion has been successfully used in single-view dimensionality reduction with kernel representation \cite{weinberger2004learning} and Bayesian estimation \cite{wang2017latent}.

Orthogonal projection has been explored in LDA (\ref{op:lda-tr}) with $N=I_{d_1}$ in \cite{wang2007trace,zhln:2010} for single-view feature extraction, and in CCA with two views \cite{wang2020orthogonal,shsy:2013,cugh:2015} and  MCCA with more than two views \cite{Zhang2020Self,shen2015orthogonal} for multi-view feature extraction. However, imposing
orthogonality constraints has not yet been well studied for supervised multi-view subspace learning.

\subsection{A Unified Framework}
We propose a novel unified orthogonal multi-view subspace learning (OMvSL) framework in the trace ratio formulation given by
\begin{subequations}\label{op:ouf}
	\begin{align}
	\max_{\{P_s\}} & \sum_{s=1}^v \sum_{t=1}^v \frac{ \tr(P_s^{\T} \Phi_{s,t} P_t) }
	{\sqrt{\tr(P_s^{\T} \Psi_{s,s} P_s )\vphantom{P_t^{\T}}} \sqrt{\tr(P_t^{\T} \Psi_{t,t} P_t )}}
	\label{op:ouf-1}\\
	\st &~ P_s^{\T} P_s = I_k, \forall s, \label{op:ouf-2}
	\end{align}
\end{subequations}
where $\Psi_{s,s}$ for $s=1,\ldots, v$ are positive semi-definite matrices.
As stated in \cite{wang2007trace}, the trace ratio formation is an essential formulation for  general dimensionality reduction and may lead to solutions that are superior to the ones
from the ratio trace formulation.

The proposed OMvSL \eqref{op:ouf} encompasses OLDA and OMCCA as special cases:
\begin{enumerate}
	\item For $v=1$, (\ref{op:ouf}) with $\Psi_{1,1} = S_b^1$ and $\Psi_{1,1} = M$ reduces to OLDA (\ref{op:lda-tr}).
	\item For $v\geq 2$, (\ref{op:ouf}) with $\Phi_{s,t} = C_{s,t}$ and $\Psi_{s,s} = C_{s,s}$ becomes OMCCA (\ref{op:omcca}).
\end{enumerate}
OMvSL \eqref{op:ouf} can be used to inspire various models in the form of trace ratio formulations.
We  shall present various novel models instantiated from OMvSL (\ref{op:ouf}) for multi-view discriminant analysis in subsection \ref{sec:mvda} and multi-label classification in subsection~\ref{sec:mlabel}.

OMvSL is a versatile framework, but it presents a difficult optimization problem to solve.
Generic optimization techniques \cite{abms:2008,nowr:2006,weyi:2013} can always be applied, but they ignore the special form in the objective, are usually not so efficient as customized algorithms, and, worst of all, are not practically feasible even for   datasets of modest scale.
In Section~\ref{sec:alg4OMDA}, we will present a successive approximation algorithm  that approximately solves OMvSL efficiently.

It is tempting to modify OMvSL (\ref{op:ouf}) by adding
\begin{align}
P_s^{\T} \Psi_{s,s} P_s = I_k, \forall s, \label{eq:ind}
\end{align}
to eliminate the denominators in the objective
in hope for a simpler problem to solve. But \eqref{eq:ind} and \eqref{op:ouf-2} may conflict.
To see that, we note that $P_s^{\T} \Psi_{s,s} P_s\succeq \lambda_{\min} P_s^{\T} P_s$ where
$\lambda_{\min}$ is the smallest eigenvalue of $\Psi_{s,s}$, and so if $\lambda_{\min}>1$, then there is no way to
satisfy both \eqref{eq:ind} and \eqref{op:ouf-2} at the same time.
On the other hand, \eqref{op:ouf-1} with \eqref{eq:ind} but not \eqref{op:ouf-2}
bears similarity to existing models of the ratio trace formulation in subsection~\ref{ssec:MvFE}.

\subsection{Novel Multi-view Discriminant Analysis Models} \label{sec:mvda}
Three orthogonal multi-view discriminant analysis models are proposed, inspired by existing  models similar to (\ref{op:ouf}) for multi-class classification where $\by_i \in \{0,1\}^{c}$ and $\by_i^{\T} \bone_c=1$ \cite{sharma2012generalized,cao2017generalized,sun2015multiview}.
Each new model is  intrinsically different from its corresponding existing model due to the trace ratio formulation (\ref{op:ouf-1}) and orthogonality constraints (\ref{op:ouf-2}).

\noindent {\bf Orthogonal GMA.}
The proposed orthogonal variant of GMA (\ref{op:gma}), called {\em Orthogonal GMA} (OGMA), is (\ref{op:ouf}) with
\begin{subequations}\label{eq:phi_gma}
	\begin{align}
	\Phi_{s,t} &= \left\{
	\begin{array}{ll}
	S_b^s, & s=t, \\
	\alpha_{s,t} C_{s,t}, & s \ne t,
	\end{array}
	\right. \label{eq:phi_gma-1}\\
	\Psi_{s,s} &= S_w^s . \label{eq:phi_gma-2}
	\end{align}
\end{subequations}

\noindent {\bf Orthogonal MLDA.}
The  proposed orthogonal variant of MLDA (\ref{op:gma-1}) with (\ref{con:mulda}), called {\em Orthogonal MLDA} (OMLDA),
is (\ref{op:ouf}) with  
(\ref{eq:phi_gma-1}) and
\begin{align}
\Psi_{s,s} = C_{s,s}. \label{eq:mlda-osi}
\end{align}

\noindent {\bf Orthogonal MvMDA.}
The proposed orthogonal variant of  MvMDA (\ref{op:mvmda}), called {\em Orthogonal MvMDA} (OMvMDA),
is (\ref{op:ouf}) with
\begin{align}
\Phi_{s,t}  = A, ~ \Psi_{s,s} = S_w^s. \label{eq:mvmda}
\end{align}

\subsection{Novel Multi-view Multi-label Classification Models} \label{sec:mlabel}
In multi-view multi-label classification, the output $\by_i \in \{0,1\}^c$ with $c$ labels and $\{ \bx_i^{(1)}, \ldots, \bx_i^{(v)}, \by_i \}_{i=1}^n$ is the paired data. Under the proposed framework (\ref{op:ouf}), we can come up the following two strategies to incorporate output data for multi-view multi-label classification:

\noindent{\bf Orthogonal Multi-view Multi-label CCA (OM$^2$CCA).} This approach is proposed to take the output
labels in $Y=[\by_1,\ldots, \by_n] \in \{0,1\}^{c \times n}$ as the $(v+1)$st view $X_{v+1} :=Y$ in OMCCA  \cite{sun2010canonical}. Together with $v$ input views, there are $v+1$ views. OMCCA is employed to learn projection matrices $\{P_s\}$ and $P_{v+1}:=P_Y$ in a latent common space. This idea has been explored for $v=1$ in \cite{sun2010canonical,wang2020orthogonal,Zhang2020Self}. OMCCA is instantiated from (\ref{op:ouf}) with
\begin{subequations}\label{eq:omcca}
	\begin{align}
	\Phi_{s,t} &= \left\{
	\begin{array}{ll}
	0, & s=t, \\
	C_{s,t}, & s \ne t,
	\end{array}
	\right. \label{eq:ommcca-1}\\
	\Psi_{s,s} &= C_{s,s}, \label{eq:omcca-2}
	\end{align}
\end{subequations}
for $s, t=1,\ldots,v+1$, where $C_{s,v+1} = X_s H Y = C_{v+1,s}^{\T}$.

\noindent{\bf Orthogonal Hilbert-Schmidt Independence Criterion (OHSIC).}  This approach is proposed to take the HSIC criterion \cite{gretton2005measuring} for learning embedding of each input view. The estimator of HSIC is defined as
\begin{align}
\textrm{HSIC}(Z_s, Y) = \frac{1}{(n-1)^2} \tr(Z_s^{\T} Z_s H_n Y^{\T} Y H_n),
\end{align}
where $Z_s = P_s^{\T} X_s$ and $Z_s^{\T} Z_s$ is the linear kernel of the projected data of view $s$. To achieve the best alignment between $Z_s$ and $Y$, the maximization of HSIC with respect to $P_s$ is expected. The proposed HSIC method is instantiated from (\ref{op:ouf})  with
\begin{subequations}\label{eq:hsic}
	\begin{align}
	\Phi_{s,t} &= \left\{
	\begin{array}{ll}
	X_sH_n Y^{\T} Y H_n X_s^{\T} , & s=t, \\
	\alpha_{s,t} C_{s,t}, & s \ne t,
	\end{array}
	\right. \label{eq:hsic-1}\\
	\Psi_{s,s} &= C_{s,s} , \label{eq:hsic-2}
	\end{align}
\end{subequations}
for $s, t=1,\ldots, v$.
Different from (\ref{eq:omcca}), this approach does not  learn $P_Y$.

\section{An eigenvalue algorithm}\label{sec:geig}

Currently there is no numerically efficient method to solve OMvSL \eqref{op:ouf}, especially for high-dimensional datasets.
In preparing for presenting a successive approximation method  in the next section,
in what follows we will outline a Krylov subspace method that is suitable for computing the top eigenpair for the generalized eigenvalue problem.
To simplify notation, we will explain the method generically for
\begin{equation}\label{eq:eig-generic}
A\bx=\lambda B\bx
\quad\mbox{with}\quad
\bx\in\cR(B),
\end{equation}
where $A,\, B\in\bbR^{d\times d}$ are symmetric,  $\cR(A)\subseteq\cR(B)$, $B\succeq 0$. Suppose that matrix-vector products, $A\bx$ and $B\bx$ for
any given $\bx$, are the only operations that can be done numerically.

The Krylov subspace method will serve as the workhorse of our
successive approximation algorithm  that approximately solves OMvSL  \eqref{op:ouf}. It is worth noting that $B$ may be singular
and will be singular in our applications. A common past practice in data science is simply to perturb $B$ to
$B+\epsilon I_d$ for some tiny $\epsilon>0$ as a regularization and solve
$A\bx=\lambda (B+\epsilon I_d)\bx$ instead. While this successfully gets rid of the singularity issue, it may create a more
serious one in that the eventually computed top eigenvector likely falls into the null spaces of $A$ and $B$ and is thus useless for the underlying application.

The method is the so-called Locally Optimal Block Preconditioned Extended Conjugate Gradient method (LOBPECG)
\cite[Algorithm 2.3]{li:2015} which combines LOBPCG of Knyazev \cite{knya:2001} and
the {\em inverse free Krylov subspace method\/} of Golub and Ye~\cite{goye:2002}. For our current application, we will simply
use the version without preconditioning and blocking. Algorithm~\ref{alg:eLOCG} outlines an adaption
of \cite[Algorithm 2.3]{li:2015} for \eqref{eq:eig-generic}.

\begin{algorithm}
	\caption{Locally Optimal Extended Conjugate Gradient method (LOECG)}\label{alg:eLOCG}
	\begin{algorithmic}[1]
		\REQUIRE eigenvalue problem \eqref{eq:eig-generic}, $\nkry$, tolerance $\tol$;
		\ENSURE  top eigenpair $(\lambda,\bx)$.
		\STATE  pick a random $\bx_1\in\bbR^d$;
		\STATE  $\bx_1=B\bx_1$, $\bx_1=\bx_1/\|\bx_1\|_2$, $\rho=\bx_1^{\T}A\bx_1/\bx_1^{\T}B\bx_1$;
		\STATE  $\br=A\bx_1-\rho B\bx_1$, $\algres=\|\br\|_2/(\|A\|_2+|\rho|\|B\|_2)$;
		\STATE  $\bx_0=0$;
		\WHILE{$\algres\ge\tol$}
		\STATE compute an orthonormal basis matrix $Z$ of the Krylov subspace
		\begin{equation}\label{eq:eLOCG:subspace}
		\!\!\!\cR(Z)\!=\!\cR([\bx_1, (A-\rho B)\bx_1,\ldots,(A-\rho B)^{\nkry}\bx_1]);\!\!
		\end{equation}
		\STATE $\bp=\bx_0-Z(Z^{\T}\bx_0)$, $W=[Z,\bp/\|\bp\|_2]$;
		\STATE compute the top eigenpair $(\rho,\bz)$ of $W^{\T}AW-\lambda W^{\T}BW$, where $\|\bz\|_2=1$;
		\STATE $\bx_0=\bx_1$;
		\STATE $\bx_1=W\bz$, $\br=A\bx_1-\rho B\bx_1$, $\algres=\|\br\|_2/(\|A\|_2+|\rho|\|B\|_2)$;
		\ENDWHILE
		\RETURN $(\rho,\bx_1)$.
	\end{algorithmic}
\end{algorithm}

A few comments regarding this algorithm and its efficient implementation are in order:
\begin{enumerate}
	\item There is no need to use $\|A\|_2$ and $\|B\|_2$ exactly. Some very rough estimates are just good enough so long as
	the estimates have the same magnitudes, respectively.
	\item At line 2, it is to make sure $\bx_1\in\cR(B)$.
	\item There are two parameters to choose: the order $\nkry$ of the Krylov space \eqref{eq:eLOCG:subspace} and the stopping tolerance $\tol$. There is no easy way to determine what the optimal $\nkry$ is. In general, the larger $\nkry$ is, the faster the convergence, but then more work in generating the orthonormal basis matrix $Z$.
	Usually $\nkry=10$ is good. For applications that required accuracy is not too stringent, $\tol=10^{-6}$ is often more than adequate.
	\item The orthonormal basis matrix $Z$ can be efficiently computed by the symmetric Lanczos process \cite{demm:1997}. For better
	numerical stability in making sure $Z^{\T}Z=I$ within the working precision, re-orthogonalization may be necessary.
	\item At line 7, some guard step must be taken. For example, in the first iteration $\bx_0=0$ and so $\bp=0$. We should just let $W=Z$.
	In the subsequent iterations, we will have to test whether $\bx_0$ is in or nearly in $\cR(Z)$. For that purpose, we need another tolerance, e.g., if $\|\bp\|_2\le 10^{-12}$, then we will regard already $\bx_0\in\cR(Z)$ and set $W=Z$; otherwise,
	re-orthogonalize $\bp$ against $Z$: $\bp=\bp-Z(Z^{\T}\bp)$ to make sure $W^{\T}W=I$ within the working precision.
	\item At line 8, $AW$ and $BW$, except their last columns, are likely already computed at the time of generating $Z$ at line 6.
	They should be reused here to save work.
	\item The eigenvalue problem for $W^{\T}AW-\lambda W^{\T}BW$ is of very small size $(\nkry+1)\times(\nkry+1)$ at most and
	also $W^{\T}BW\succ 0$ as guaranteed by Lemma~\ref{lm:Rc} below. It can be solved by first computing the Cholesky
	decomposition $W^{\T}BW=R^{\T}R$ and then the full eigen-decomposition of $R^{-\T}(W^{\T}AW)R^{-1}$. Finally, $\bz=R^{-1}\bw$, where
	$\bw$ is the top eigenvector of $R^{-\T}(W^{\T}AW)R^{-1}$.
\end{enumerate}

\begin{lemma}\label{lm:Rc}
	In Algorithm~\ref{alg:eLOCG}, $\cR(W)\subseteq\cR(B)$ and thus $W^{\T}BW\succ 0$.
\end{lemma}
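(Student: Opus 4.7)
The plan is to prove $\cR(W)\subseteq\cR(B)$ by induction on the outer iteration of Algorithm~\ref{alg:eLOCG}, and then deduce $W^{\T}BW\succ 0$ by combining this subspace inclusion with the fact that the columns of $W$ are orthonormal.

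First I would establish the invariance of $\cR(B)$ under the shifted operator $A-\rho B$. Because $\cR(A)\subseteq\cR(B)$ by hypothesis and $B\succeq 0$, any $\bv\in\cR(B)$ satisfies $A\bv\in\cR(A)\subseteq\cR(B)$ and $B\bv\in\cR(B)$, so $(A-\rho B)\bv\in\cR(B)$. Line~2 of the algorithm replaces $\bx_1$ by $B\bx_1$, so $\bx_1\in\cR(B)$. Applying the invariance repeatedly gives $(A-\rho B)^j\bx_1\in\cR(B)$ for every $j\ge 0$, hence $\cR(Z)\subseteq\cR(B)$ for the Krylov basis matrix $Z$ generated in \eqref{eq:eLOCG:subspace}.

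Next I would handle the extra column $\bp/\|\bp\|_2$ of $W$ by induction on the while-loop iterations, showing that at line~7 we always have $\bx_0\in\cR(B)$. Initially $\bx_0=0\in\cR(B)$. For subsequent iterations, $\bx_0$ was set at line~9 to the $\bx_1$ produced at line~10 of the preceding iteration, and $\bx_1=W\bz$ lies in $\cR(W)$. The inductive hypothesis gives $\cR(W)\subseteq\cR(B)$, so $\bx_0\in\cR(B)$ at the current iteration. Because $\cR(Z)\subseteq\cR(B)$, the projection $Z(Z^{\T}\bx_0)$ is in $\cR(B)$, and therefore $\bp=\bx_0-Z(Z^{\T}\bx_0)\in\cR(B)$. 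Combined with $\cR(Z)\subseteq\cR(B)$, this yields $\cR(W)\subseteq\cR(B)$ for the current iteration, closing the induction. The guard step at line~7 (either setting $W=Z$ or re-orthogonalizing $\bp$ against $Z$) keeps the columns of $W$ orthonormal but does not change $\cR(W)$, so this step is harmless for the inclusion.

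Finally, to obtain $W^{\T}BW\succ 0$, I would factor $B=B^{1/2}B^{1/2}$ using the symmetric positive semi-definite square root. Then $W^{\T}BW=(B^{1/2}W)^{\T}(B^{1/2}W)\succeq 0$. Since $B$ is symmetric, $\cR(B)=\cN(B)^{\perp}=\cN(B^{1/2})^{\perp}$, so $B^{1/2}$ is injective on $\cR(B)$. For any nonzero $\bc$, orthonormality of the columns of $W$ gives $W\bc\neq 0$, and $W\bc\in\cR(W)\subseteq\cR(B)$ implies $B^{1/2}W\bc\neq 0$, whence $\bc^{\T}W^{\T}BW\bc=\|B^{1/2}W\bc\|_2^2>0$. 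The only mildly delicate point is the inductive argument for $\bx_0$, because it spans two consecutive while-loop iterations; the rest is a direct consequence of $\cR(A)\subseteq\cR(B)$ and the construction of $Z$.
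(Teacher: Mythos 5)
Your proposal is correct and follows essentially the same route as the paper's proof: the same induction on the while-loop maintaining $\bx_0,\bx_1\in\cR(B)$, using $\cR(A)\subseteq\cR(B)$ to keep the Krylov basis inside $\cR(B)$. The only difference is that you spell out the final positive-definiteness step via $B^{1/2}$ and the orthonormality of $W$'s columns, which the paper leaves implicit.
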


\begin{proof}
	Initially, after line 2, $\bx_1\in\cR(B)$.
	Therefore at \eqref{eq:eLOCG:subspace}, $\cR(Z)\subseteq\cR(B)$ because $\cR(A)\subseteq\cR(B)$. In the first iteration of the {\bf while}-loop,
	$\bx_0=0$ and $W=Z$ and so $\cR(W)\subseteq\cR(B)$, $\bx_0,\,\bx_1\in\cR(B)$. Inductively,
	each time at the beginning of executing the {\bf while}-loop, we have $\bx_0,\,\bx_1\in\cR(B)$. So we will have
	at line 7, $\bp\in\cR(B)$ and $\cR(Z)\subseteq\cR(B)$, implying $\cR(W)\subseteq\cR(B)$. Consequently,
	at the conclusion of executing the {\bf while}-loop, we still have $\bx_0,\,\bx_1\in\cR(B)$.
	
	Since $B\succeq 0$ and  $\cR(W)\subseteq\cR(B)$, $W^{\T}BW$ must be positive definite.
\end{proof}

\section{Algorithm for OMvSL}\label{sec:alg4OMDA}

For ease of presentation, we rewrite OMvSL \eqref{op:ouf} as
\begin{equation}\label{eq:OMDA}
\max_{\{P_s\}}  g(\{P_s\}): \,
\st ~ P_s^{\T}P_s=I_k,\,\,\cR(P_s)\subseteq\cR(\Psi_{s,s})\, \forall s,
\end{equation}
where
$$
g(\{P_s\}):=\sum_{s=1}^v \sum_{t=1}^v \frac{ \tr(P_s^{\T} \Phi_{s,t} P_t) }
{\sqrt{\tr(P_s^{\T} \Psi_{s,s} P_s )\vphantom{P_t^{\T}}} \sqrt{\tr(P_t^{\T} \Psi_{t,t} P_t )}}.
$$
For $k=1$, all $P_s$ are column vectors. By convention that we  use lowercase letters
for vectors, we will replace them by $\bp_s$ instead. Since $g(\{\bp_s\})$ is homogeneous in each $\bp_s$, i.e.,
$g(\{\bp_s/\alpha_s\})\equiv g(\{\bp_s\})$ for any scalar $\alpha_s>0$, the constraints $\bp_s^{\T}\bp_s=1$ is inconsequential. In fact, \eqref{eq:OMDA} is  equivalent to
\begin{equation}\label{eq:MDA-vec2}
\max_{\{\bp_s\in\bbR^{n_s}\}} f(\{\bp_s\})\,:\,\mbox{s.t.}\,\, \bp_s^{\T}\Psi_{s,s}\bp_s=1,\,\bp_s\in\cR(\Psi_{s,s})
\,\forall s,
\end{equation}
where $f(\{\bp_s\})$ is given by
\begin{equation}\label{eq:f(ps)}
f(\{\bp_s\}):=\sum_{s=1}^v \sum_{t=1}^v   \bp_s^{\T} \Phi_{s,t}\bp_t.
\end{equation}
The KKT condition of \eqref{eq:MDA-vec2} gives rise to a multi-parameter eigenvalue problem:
\begin{subequations}\label{eq:MDA-eig2}
	\begin{equation}\label{eq:MDA-eig2-1}
	\scrA\bp=\scrB\Lambda\bp,\,\,\bp\in\cR(\scrB),
	\end{equation}
	where
	\begin{equation}\label{eq:MDA-eig2-2}
	\scrA=\begin{bmatrix}
	\Phi_{11} & \Phi_{12} & \cdots & \Phi_{1v} \\
	\Phi_{21} & \Phi_{22} & \cdots & \Phi_{2v} \\
	\vdots & \vdots & \ddots & \vdots \\
	\Phi_{v 1} & \Phi_{v 2} & \cdots & \Phi_{vv}
	\end{bmatrix},\,
	\scrB=\begin{bmatrix}
	\Psi_{11} &  &  &  \\
	& \Psi_{22} &  &  \\
	&  & \ddots &  \\
	&  &  & \Psi_{vv}
	\end{bmatrix},
	\end{equation}
	\begin{equation}\label{eq:MDA-eig2-3}
	\Lambda
	=\begin{bmatrix}
	\lambda_1I_{n_1} &  &  &  \\
	& \lambda_2I_{n_2} &  &  \\
	&  & \ddots &  \\
	&  &  & \lambda_{v}I_{n_{v}}
	\end{bmatrix},~~\bp=\left[\begin{array}{c}\bp_1 \\\vdots \\\bp_v\end{array}\right].
	\end{equation}
\end{subequations}
This is also a long standing problem in statistics, and there is no existing numerical technique that is readily available to solve it with guarantee. Existing methods  include variations of the power method for matrix eigenvalues 
\cite{chwa:1993},
which are simple to use but often slowly convergent, and adaptations of common optimization techniques onto  Riemannian manifolds to solve \eqref{eq:MDA-vec2}
\cite{zhan:2010,zhan:2012},
which often converge faster but  use the gradient or even Hessian of $f$ and, as a result, are not particularly well suited for large scale problems. None of those methods guarantee to deliver the global optimum of \eqref{eq:MDA-vec2}. 

In many real-world applications, an approximate solution is just as good as a very accurate solution. A relaxed problem to
\eqref{eq:MDA-vec2} is
\begin{equation}\label{eq:MDA-vec3}
\max_{\{\bq_s\}} f(\{\bq_s\})\,:\,\,\mbox{s.t.}\,\, \sum_{s=1}^v\bq_s^{\T}\Psi_{s,s}\bq_s=1,\,\bq_s\in\cR(\Psi_{s,s}).
\end{equation}
The KKT condition for \eqref{eq:MDA-vec3} is
\begin{equation}\label{eq:MCCA-eig-1}
\scrA \bq=\lambda\scrB \bq,\,\,\bq\in\cR(\scrB)
\end{equation}
which is a generalized eigenvalue problem that has been well studied, where $\scrA$ and $\scrB$ are as given by \eqref{eq:MDA-eig2-2}.
Often
$$
\cR(\scrA)\subseteq\cR(\scrB)
$$
which we will assume in this paper and the top eigenvector $\bq$  is the maximizer of \eqref{eq:MDA-vec3}.
Even though $\scrB$ is positive semi-definite, it is possible that $\scrB$ is singular.
In the previous section, we discussed a common practice and its fatal shortcoming for
the underlying data science application.
Algorithm~\ref{alg:eLOCG} in section~\ref{sec:geig} can be applied
to solve \eqref{eq:MCCA-eig-1} for its top eigenpair in such a way that singular $\scrB$ does not matter, without
regularization.

We propose to
construct an approximation solution for \eqref{eq:MDA-vec2}, and thereby for \eqref{eq:OMDA} with $k=1$, from
the solution to \eqref{eq:MDA-vec3} for $k=1$ as follows.
Let $(\lambda_1,\bq^{\opt}=[\bq_s^{\opt}])$ with $\bq_s^{\opt}\in\bbR^{d_s}$ be the top eigenpair of the eigenvalue problem
\eqref{eq:MCCA-eig-1}. An approximate solution is then constructed by
\begin{equation}\label{eq:MCCA1->MCCA2}
\gamma_s=\|\bq_s^{\opt}\|_2~,\,\,
\bp_s^{\opt}= \bq_s^{\opt}/\gamma_s,\,\,\forall s.
\end{equation}
This solves \eqref{eq:OMDA} with $k=1$ approximately, or finds an approximation to the first columns of optimal $P_s$ of
\eqref{eq:OMDA}.
Suppose that
approximations to the first $\ell$ columns, say  $\bp_s^{(j)}\in\bbR^{d_s}$ for $1\le j\le \ell$, of nearly optimal $P_s$ of \eqref{eq:OMDA}
are obtained and $\ell<k$. Let
\begin{equation}\label{eq:Yt}
P_s^{(\ell)}=\big[\bp_s^{(1)},\bp_s^{(2)},\ldots,\bp_s^{(\ell)}\big]\in\bbR^{d_s\times\ell},\,\,\forall s.
\end{equation}
It is reasonable to assume
\begin{equation}\label{eq:assumeYt}
[P_s^{(\ell)}]^{\T}P_s^{(\ell)}=I_\ell,\,\,\cR(P_s^{(\ell)})\subseteq\cR(\Psi_{s,s}),\,\,\forall s.
\end{equation}
We propose
to find the next columns of nearly optimal $P_s$ for all  $s$ of \eqref{eq:OMDA} by solving
\begin{subequations}\label{eq:OMDA-vec-SS-t}
	\begin{align}
	\max_{\{\bq_s\in\bbR^{n_s}\}} f(\{\bq_s\}):\,\,\mbox{s.t.}&\quad \sum_{s=1}^{v}\bq_s^{\T}\Psi_{s,s}\bq_s=1,
	\,\,\bq_s\in\cR(\Psi_{s,s})\,\forall s,         \label{eq:OMDA-vec-SS-t-1}\\
	&\quad \bq_s^{\T}P_s^{(\ell)}=0\,\forall s, \label{eq:OMDA-vec-SS-t-2}
	\end{align}
\end{subequations}
and then normalize each $\bq_s$ of the optimizer of \eqref{eq:OMDA-vec-SS-t} as in \eqref{eq:MCCA1->MCCA2} to construct the next $\bp_s^{(\ell+1)}$.

\begin{theorem}\label{thm:OMDA-vec-SS-t}
	Given $P_s^{(\ell)}$ as in \eqref{eq:Yt} satisfying \eqref{eq:assumeYt},
	problem \eqref{eq:OMDA-vec-SS-t} is equivalent to
	\begin{equation}\label{eq:OMDA-vec-SS-t'}
	\max_{\{\bq_s\in\bbR^{n_s}\}} f_{\ell}(\{\bq_s\}):\,\,\mbox{s.t.}\, \sum_{s=1}^{v}\bq_s^{\T}\Psi_{s,s}^{(\ell)}\bq_s=1,
	\,\,\bq_s\in\cR(\Psi_{s,s}^{(\ell)})\,\forall s,
	\end{equation}
	where
	\begin{subequations}\label{eq:Pi-s-PhiPsi-ell}
		\begin{align}
		\Pi_s^{(\ell)}&=I_{n_s}-P_s^{(\ell)}\big[P_s^{(\ell)}\big]^{\T}, \label{eq:Pi-s-PhiPsi-ell-1}  \\
		\Phi_{s,t}^{(\ell)}&=\Pi_s^{(\ell)}\Phi_{s,t}\Pi_t^{(\ell)},\,\, \Psi_{s,s}^{(\ell)}=\Pi_s^{(\ell)}\Psi_{s,s}\Pi_s^{(\ell)},
		\label{eq:Pi-s-PhiPsi-ell-2}  \\
		f_{\ell}(\{\bq_s\})&=\sum_{s,t}\bq_s^{\T}\Phi_{s,t}^{(\ell)}\bq_t.     \label{eq:Pi-s-PhiPsi-ell-3}
		\end{align}
	\end{subequations}
\end{theorem}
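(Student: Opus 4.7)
The plan is to verify that the two optimization problems \eqref{eq:OMDA-vec-SS-t} and \eqref{eq:OMDA-vec-SS-t'} share the same feasible set and that $f$ and $f_{\ell}$ agree on it. The unifying observation is that, using $[P_s^{(\ell)}]^{\T} P_s^{(\ell)} = I_\ell$ from \eqref{eq:assumeYt}, the matrix $\Pi_s^{(\ell)}$ defined in \eqref{eq:Pi-s-PhiPsi-ell-1} is exactly the orthogonal projector onto $\cR(P_s^{(\ell)})^{\perp}$. Therefore the constraint $\bq_s^{\T} P_s^{(\ell)} = 0$ in \eqref{eq:OMDA-vec-SS-t-2} is equivalent to $\Pi_s^{(\ell)}\bq_s = \bq_s$.

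Given this idempotent identity, substituting $\bq_s = \Pi_s^{(\ell)}\bq_s$ into every quadratic form yields, for each $s,t$,
\[
\bq_s^{\T}\Phi_{s,t}\bq_t = \bq_s^{\T}\Pi_s^{(\ell)}\Phi_{s,t}\Pi_t^{(\ell)}\bq_t = \bq_s^{\T}\Phi_{s,t}^{(\ell)}\bq_t,\qquad
\bq_s^{\T}\Psi_{s,s}\bq_s = \bq_s^{\T}\Psi_{s,s}^{(\ell)}\bq_s.
\]
Summing these identities converts $f(\{\bq_s\})$ into $f_{\ell}(\{\bq_s\})$ and turns the normalization $\sum_s \bq_s^{\T}\Psi_{s,s}\bq_s = 1$ into $\sum_s \bq_s^{\T}\Psi_{s,s}^{(\ell)}\bq_s = 1$, matching \eqref{eq:OMDA-vec-SS-t'}.

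The main obstacle is the remaining range identity
\[
\cR(\Psi_{s,s}^{(\ell)}) = \cR(\Psi_{s,s}) \cap \cR(P_s^{(\ell)})^{\perp},
\]
which must be established to ensure the two feasible sets coincide. I would prove it via the elementary lemma that, for any symmetric $\Psi \succeq 0$ and any matrix $M$, $\cR(M\Psi M^{\T}) = M\,\cR(\Psi)$; this follows by factoring $\Psi = \Psi^{1/2}\Psi^{1/2}$ and invoking the standard fact $\cR(AA^{\T}) = \cR(A)$. Applying it with $M = \Pi_s^{(\ell)}$ gives $\cR(\Psi_{s,s}^{(\ell)}) = \Pi_s^{(\ell)}\,\cR(\Psi_{s,s})$. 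For $\supseteq$, any $\bq$ in the intersection satisfies $\Pi_s^{(\ell)}\bq = \bq$ together with $\bq \in \cR(\Psi_{s,s})$, hence $\bq \in \Pi_s^{(\ell)}\cR(\Psi_{s,s}) = \cR(\Psi_{s,s}^{(\ell)})$. For $\subseteq$, any $\bq = \Pi_s^{(\ell)}\bv$ with $\bv \in \cR(\Psi_{s,s})$ automatically lies in $\cR(P_s^{(\ell)})^{\perp}$, and the hypothesis $\cR(P_s^{(\ell)}) \subseteq \cR(\Psi_{s,s})$ from \eqref{eq:assumeYt} ensures $P_s^{(\ell)}[P_s^{(\ell)}]^{\T}\bv \in \cR(\Psi_{s,s})$, so $\bq = \bv - P_s^{(\ell)}[P_s^{(\ell)}]^{\T}\bv \in \cR(\Psi_{s,s})$ as well. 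Combined with the objective calculation, this yields the claimed equivalence.
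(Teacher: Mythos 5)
Your proof is correct and follows essentially the same route as the paper's: both arguments show the two feasible sets coincide and that $f$ and $f_{\ell}$ agree there, using $\Pi_s^{(\ell)}\bq_s=\bq_s$ for the quadratic forms and the factorization $\Psi_{s,s}=\Psi_{s,s}^{1/2}\Psi_{s,s}^{1/2}$ together with $\cR(AA^{\T})=\cR(A)$ and the hypothesis $\cR(P_s^{(\ell)})\subseteq\cR(\Psi_{s,s})$ for the range constraints. The only difference is organizational: you package the two inclusions the paper proves separately into the single identity $\cR(\Psi_{s,s}^{(\ell)})=\cR(\Psi_{s,s})\cap\cR(P_s^{(\ell)})^{\perp}$, which is a clean way to present the same content.
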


\begin{proof}
	We will show that the feasible sets for \eqref{eq:OMDA-vec-SS-t} and \eqref{eq:OMDA-vec-SS-t'} are the same and
	$f(\{\bq_s\})=f_\ell(\{\bq_s\})$ for any vector $\{\bq_s\}$ in the feasible set.
	
	Let $\{\bq_s\}$ satisfy  the constraints of \eqref{eq:OMDA-vec-SS-t}.
	Since $\bq_s^{\T}P_s^{(\ell)}=0$, we have $\Pi_s^{(\ell)}\bq_s=\bq_s$. Since
	$\bq_s\in\cR(\Psi_{s,s})=\cR(\Psi_{s,s}^{1/2})$ where $\Psi_{s,s}^{1/2}$ is the unique positive semi-definite square root of
	$\Psi_{s,s}$, we have $\bq_s=\Psi_{s,s}^{1/2}\bw_s$ for some $\bw_s$. Therefore
	\begin{align*}
	\bq_s=\Pi_s^{(\ell)}\bq_s&=\Pi_s^{(\ell)}\Psi_{s,s}^{1/2}\bw_s \\
	&\in\cR(\Pi_s^{(\ell)}\Psi_{s,s}^{1/2}) =\cR(\Pi_s^{(\ell)}\Psi_{s,s}\Pi_s^{(\ell)}), \\
	\bq_s^{\T}\Phi_{s,t}\bq_t&=\big[\Pi_s^{(\ell)}\bq_s\big]^{\T}\Phi_{s,t}\big[\Pi_s^{(\ell)}\bq_t\big]
	=\bq_s^{\T}\Phi_{s,t}^{(\ell)}\bq_t.
	\end{align*}
	Hence $\{\bq_s\}$ satisfies the constraints of \eqref{eq:OMDA-vec-SS-t'} and $f(\{\bq_s\})=f_{\ell}(\{\bq_s\})$.
	On the other hand, let $\{\bq_s\}$ satisfy  the constraints of \eqref{eq:OMDA-vec-SS-t'}.
	Since $\bq_s\in\cR(\Psi_{s,s}^{(\ell)})=\cR(\Pi_s^{(\ell)}\Psi_{s,s}^{1/2})$,
	we have $\bq_s=\Pi_s^{(\ell)}\Psi_{s,s}^{1/2}\bw_s$ for some $\bw_s$ and therefore
	\begin{align*}
	&\bq_s^{\T}P_s^{(\ell)}=\bw_s^{\T}\Psi_{s,s}^{1/2}\Pi_s^{(\ell)}P_s^{(\ell)}=0,\\
	&\bq_s=\Psi_{s,s}^{1/2}\bw_s-P_s^{(\ell)}\big[P_s^{(\ell)}\big]^{\T}\Psi_{s,s}^{1/2}\bw_s\in\cR(\Psi_{s,s}^{1/2})=\cR(\Psi_{s,s}).
	\end{align*}
	That $\bq_s^{\T} P_s^{(\ell)}=0$ implies $\Pi_s^{(\ell)}\bq_s=\bq_s$ for all $s$, and therefore
	$$
	\bq_s^{\T}\Phi_{s,t}\bq_t=\bq_s^{\T}\Pi_s^{(\ell)}\Phi_{s,t}\Pi_t^{(\ell)}\bq_t=\bq_s^{\T}\Phi_{s,t}^{(\ell)}\bq_t.
	$$
	Hence also $\{\bq_s\}$ satisfies  the constraints of \eqref{eq:OMDA-vec-SS-t} and $f(\{\bq_s\})=f_{\ell}(\{\bq_s\})$.
\end{proof}

In view of our previous discussion,  problem \eqref{eq:OMDA-vec-SS-t'} is equivalent to finding the top eigenpair of
\begin{equation}\label{eq:MCCA-eig-t}
\scrA^{(\ell)}\bq=\lambda\scrB^{(\ell)}\bq\quad\mbox{with $\bq\in\cR(\scrB^{(\ell)})$},
\end{equation}
where $\scrA^{(\ell)}$ and $\scrB^{(\ell)}$ take the same form as $\scrA$ and $\scrB$ in
\eqref{eq:MDA-eig2-2}, except with all
$\Phi_{s,t}$ and $\Psi_{s,s}$ replaced by $\Phi_{s,t}^{(\ell)}$ and $\Psi_{s,s}^{(\ell)}$, respectively. Note now that $\scrB^{(\ell)}$ is guaranteed singular for $\ell>1$ because for each $s$,
\begin{align*}
\rank(\Psi_{s,s}^{(\ell)})&=\rank(\Pi_s^{(\ell)}\Psi_{s,s}^{1/2}) \\
&\le\min\{\rank(\Pi_s^{(\ell)}),\,\rank(\Psi_{s,s}^{1/2})\} \\
&\le \rank(\Pi_s^{(\ell)})=n_s-\ell.
\end{align*}
Hence the range constraint $\bq\in\cR(\scrB^{(\ell)})$ is indispensable.
Any straightforward application of existing eigen-computation routine to
$\scrA^{(\ell)}-\lambda\scrB^{(\ell)}$ will likely encounter some numerical issue.
Note that $\bq\in\cR(\scrB^{(\ell)})$ is equivalent to
$\bq_s\in\cR(\Psi_{s,s}^{(\ell)})\,\forall s$ in \eqref{eq:OMDA-vec-SS-t'}.

Algorithm~\ref{alg:RiOMDA} summarizes our range constrained successive approximation  method for solving OMvSL, which calls
Algorithm~\ref{alg:eLOCG} to compute the top eigenvector of
$\scrA^{(\ell)}-\lambda\scrB^{(\ell)}$, where $\scrA^{(\ell)}\equiv \scrA$ and $\scrB^{(\ell)}\equiv \scrB$ for $\ell=0$.

\begin{algorithm}[h]
	\caption{OSAVE: Orthogonal Successive Approximation via Eigenvectors}\label{alg:RiOMDA}
	\begin{small}
		\begin{algorithmic}[1]
			\REQUIRE $\{\Phi_{s,t}\in\bbR^{d_s\times n_t},\,1\le s,\,t\le v\}$,
			$\{\Psi_{s,s}\in\bbR^{d_s\times d_s},\,1\le s\le v\}$,
			integer $1\le k\le\min\{d_1,\ldots,d_{v}\}$;
			
			\ENSURE  $\{P_s\in\bbO^{d_s\times k}\}$, the set of most correlated matrices.
			
			\STATE compute the top eigenvector $[\bq_1^{\T},\bq_2^{\T}\ldots,\bq_{v}^{\T}]^{\T}$ of $\scrA-\lambda\scrB$ by
			Algorithm~\ref{alg:eLOCG}, where $\bq_s\in\bbR^{d_s}$;
			
			\STATE $\bp_s^{(1)}=\bq_s/\|\bq_s\|_2$ for $s=1,2,\ldots,v$;
			
			\FOR{$\ell=1,2\ldots, k-1$}
			
			\STATE compute the top eigenvector $[\bq_1^{\T},\bq_2^{\T}\ldots,\bq_{v}^{\T}]^{\T}$ of
			$\scrA^{(\ell)}-\lambda\scrB^{(\ell)}$ by Algorithm~\ref{alg:eLOCG}, where $\bq_s\in\bbR^{d_s}$;
			
			\STATE $\bp_s^{(\ell+1)}=\bq_s/\|\bq_s\|_2$ for $s=1,2,\ldots,v$;
			\ENDFOR
			\STATE $P_s=[\bp_s^{(1)},\ldots,\bp_s^{(k)}]$ for $s=1,2,\ldots,v$;
			\RETURN $\{P_s\in\bbO^{d_s\times k}\}$.
		\end{algorithmic}
	\end{small}
\end{algorithm}

According to Algorithm~\ref{alg:eLOCG}, the efficiency of Algorithm~\ref{alg:RiOMDA} critically depends on the execution of
matrix-vector products by $\scrA^{(\ell)}$ and $\scrB^{(\ell)}$. Noting that how
$\scrA^{(\ell)}$ and $\scrB^{(\ell)}$ are defined, together with \eqref{eq:Pi-s-PhiPsi-ell-1} and \eqref{eq:Pi-s-PhiPsi-ell-2}, we find that
$$
\scrA^{(\ell)}=\Pi^{(\ell)}\scrA\Pi^{(\ell)},\,\,
\scrB^{(\ell)}=\Pi^{(\ell)}\scrB\Pi^{(\ell)},
$$
where $\Pi^{(\ell)}=\diag(\Pi_1^{\ell},\ldots,\Pi_s^{\ell})$. Thus $\by:=\scrX^{(\ell)}\bx$ where $\scrX$ is either $\scrA$ or $\scrB$ can be done
in three steps:
\begin{subequations}\label{eq:Xx}
	\begin{align}
	\bx&\leftarrow\Pi^{(\ell)}\bx, \label{eq:Xx-1} \\
	\by&\leftarrow\scrX\bx, \label{eq:Xx-2}\\
	\by&\leftarrow\Pi^{(\ell)}\by. \label{eq:Xx-3}
	\end{align}
\end{subequations}
The operations in \eqref{eq:Xx-1} and \eqref{eq:Xx-3} are the same one, and should be implemented as follows. In the case of
\eqref{eq:Xx-1}, write $\bx=[\bx_1^{\T},\ldots,\bx_v^{\T}]^{\T}$ where $\bx_s\in\bbR^{n_s}$ and do
$$
\bx_s\leftarrow\bx_s-P_s^{(\ell)}\big(\big[P_s^{(\ell)}\big]^{\T}\bx_s\big)\,\,\forall s,
$$
where the bracket must be respected for maximum computational efficiency. The operation in \eqref{eq:Xx-2} can be broken
into many mini-ones $\Phi_{s,t}\bx_t$, $\Psi_{s,s}\bx_s$ for all $s,\,t$ whose calculations depend on the structures in
$\Phi_{s,t}$ and $\Psi_{s,s}$ from the underlying task. While it is impossible for us to
offer recommendations on a very general setting, a frequent scenario
where OMvSL is needed has $\Phi_{s,t}$ and $\Psi_{s,s}$  taking the form
\begin{subequations}\label{eq:PhiPsi-form}
	\begin{equation}\label{eq:PhiPsi-form-1}
	\Phi_{s,t}=A_sA_t^{\T}, \quad \Psi_{s,s}=B_sB_s^{\T}
	\end{equation}
	where
	\begin{align}
	A_s&=A_s^{\raw}\left(I_{m_a}-\frac 1{m_a}\bone_{m_a}\bone_{m_a}^{\T}\right)\in\bbR^{d_s\times m_a}, \label{eq:PhiPsi-form-2}\\
	B_s&=B_s^{\raw}\left(I_{m_s}-\frac 1{m_s}\bone_{m_s}\bone_{m_s}^{\T}\right)\in\bbR^{d_s\times m_s}. \label{eq:PhiPsi-form-3}
	\end{align}
\end{subequations}
Here $A_s^{\raw}$ and $B_s^{\raw}$ represent raw input data matrices from an application, which may also be sparse.
In such a  scenario,  $A_s$ and $B_s$ should not be formed explicitly in a large scale application, i.e.,
at least one of $d_s$, $m_a$, and $m_s$ is large, say in the tens of thousands or more, and neither should $\Phi_{s,t}$ and $\Psi_{s,s}$.
As an example, $\by_s:=\Phi_{s,t}\bx_t$ can be executed in the order as follows:
$$
\bz\leftarrow (A_t^{\raw})^{\T}\bx_t,\,
\bz\leftarrow \bz-\frac {\bone_{m_a}^{\T}\bz}{m_a},\,
\by_s\leftarrow A_s^{\raw}\bz.
$$

To get a sense of the computational complexity of OSAVE (Algorithm~\ref{alg:RiOMDA}), in what follows we present a rough estimate, assuming $\Phi_{s,t}$ and $\Psi_{s,s}$ are given and dense. For the $\ell$th loop: lines 3--6 of Algorithm~\ref{alg:RiOMDA}
which calls Algorithm~\ref{alg:eLOCG}, we have, for the leading cost terms for one loop of Algorithm~\ref{alg:eLOCG}
(lines 6--10),
\begin{enumerate}[(a)]
	\item matrix-vector products by $\scrA^{(\ell)}$ and $\scrB^{(\ell)}$: $2n_{\nkry}(d^2+\sum_sd_s^2+8d\ell)$,
	\item orthgonalization in generating $W$: $6dn_{\nkry}$ if by the Lanczos process or $2dn_{\nkry}^2$ if also with full reorthgonalization (recommended),
	\item forming $W^{\T}AW$ and $W^{\T}BW$ (assuming  $AW$ and $BW$ built along the way are reused): $4dn_{\nkry}^2$,
	\item solving $W^{\T}AW-\lambda W^{\T}BW$: $14n_{\nkry}^3$ \cite[p.500]{govl:2013}.
\end{enumerate}
Here $d=\sum_sd_s$ and these estimates work for $\ell=0$, i.e., line 1 of Algorithm~\ref{alg:RiOMDA}, too. For simplicity,
let us assume that on average Algorithm~\ref{alg:eLOCG} takes $m$ iterations to finish, and full reorthgonalization is used
for robustness. Then the overall complexity estimate is
\begin{multline}\label{eq:complx}
m\left\{kn_{\nkry}\left[2d^2+\sum_sd_s^2+6dn_{\nkry}\right]+8n_{\nkry}dk^2\right\} 
\approx 2mkn_{\nkry}d^2,
\end{multline}
where we have dropped the cost in solving $W^{\T}AW-\lambda W^{\T}BW$ due to that $n_{\nkry}$ is usually of $O(1)$,
and we have assumed $k\ll d$ in practice. Further improvement in complexity is possible if  $A_s$ and $B_s$ in
\eqref{eq:PhiPsi-form} are very sparse, and then $d^2$ in \eqref{eq:complx} can be replaced by the total number of nonzero entries
in $A_s$ and $B_s$ for all $s$.

\section{Experiments} \label{sec:experiments}
In this section, we will evaluate the effectiveness of our proposed models instantiated from the unified framework (\ref{op:ouf}) by comparing with existing methods on two learning tasks:  multi-view feature extraction and multi-view multi-label classification.

\begin{landscape}
\begin{table*}[!h]
	\caption{Datasets for feature extraction (followed by classification), where the number of features for each view is shown inside the bracket.}  \label{tab:datasets}
	\vspace{-0.15in}
	\centering
	\begin{scriptsize}
		\begin{tabular}{@{}c|c|c|c|c|c|c|c|c@{}}
			\hline
			Dataset &  samples & class& view 1 &view 2 &view 3 &view 4 &view 5 &view 6 \\
			\hline
			mfeat  & 2000 & 10 &fac (216) & fou (76) & kar (64) & mor (6) & pix (240) & zer (47) \\
			Caltech101-7 & 1474 & 7 & CENTRIST (254) & GIST (512) & LBP (1180) & HOG (1008) & CH (64) & SIFT-SPM (1000)\\
			Caltech101-20 & 2386& 20 & CENTRIST (254) & GIST (512) & LBP (1180) & HOG (1008) & CH (64) & SIFT-SPM (1000)\\
			Scene15 & 4310 & 15 & CENTRIST (254) & GIST (512) & LBP (531)  & HOG (360) & SIFT-SPM (1000) & -\\
			Reuters & 18758 & 6 & English(21531) & France (24892) & German (34251) & Italian (15506) & Spanish (11547) & -\\
			Ads & 3279 & 2 & url+alt+caption (588) & origurl (495) & ancurl (472) & - & - & -\\
			\hline
		\end{tabular}
	\end{scriptsize}
\end{table*}
\end{landscape}

\subsection{Multi-view Feature Extraction} \label{sec:mvfe-exp}

\subsubsection{Datasets}
Six datasets in Table~\ref{tab:datasets} are used to evaluate the performance of the proposed models: OGMA, OMLDA, and OMvMDA in terms of multi-view feature extraction.
We apply various feature descriptors, including CENTRIST \cite{wu2008place}, GIST \cite{oliva2001modeling},
LBP \cite{ojala2002multiresolution}, histogram of oriented gradient (HOG), color histogram (CH),
and SIFT-SPM \cite{lazebnik2006beyond}, to extract features of views for image datasets: Caltech101\footnote{http://www.vision.caltech.edu/Image\_Datasets/Caltech101/}\cite{fei2007learning} and
Scene15\footnote{https://figshare.com/articles/15-Scene\_Image\_Dataset/7007177} \cite{lazebnik2006beyond}.
Note that we drop  CH for Scene15 due to the gray-level images.
Multiple Features (mfeat)\footnote{https://archive.ics.uci.edu/ml/datasets/Multiple+Features}, Internet Advertisements (Ads)\footnote{https://archive.ics.uci.edu/ml/datasets/internet+advertisements}, and Reuters\footnote{https://archive.ics.uci.edu/ml/datasets/Reuters+RCV1+RCV2\\+Multilingual,+Multiview+Text+Categorization+Test+collection} are publicly available from UCI machine learning repository.
The dataset mfeat  contains handwritten numeral
data with six views
including profile correlations (fac), Fourier coefficients of the character shapes (fou), Karhunen-Love coefficients (kar), morphological features (mor),
pixel averages in $2 \times 3$ windows (pix), and Zernike moments (zer).
Ads is used to predict whether or not a given hyperlink (associated with an image) is an advertisement and has three views: features based on the terms in the images URL, caption, and alt text (url+alt+caption), features based on the terms in the URL of the current site (origurl), and  features based on the terms in the anchor URL (ancurl).
Reuters  is a multi-view text categorization dataset  containing feature characteristics of documents originally written in five languages (English, French, German, Italian, and Spanish) and their translations over a common set of six categories (C15, CCAT, E21, ECAT, GCAT, and M11). Only a subset of Reuters, those written in English and their translations in other four languages, is used. As the feature dimension of Reuters is too big to handle by the baseline methods,
a preprocessing step is performed by PCA to keep $500$ features per view.  

\subsubsection{Compared methods}
As shown in Subsection \ref{sec:mvda}, our proposed models, although instantiated from the proposed framework (\ref{op:ouf}), are inspired by some of the existing ones. Hence, the three proposed models have close counterparts via solving  generalized eigenvalue problems. Specifically, the compared methods include
\begin{itemize}
	\item GMA \cite{sharma2012generalized}
	
	\item MLDA and MLDA-m with modifications \cite{sun2015multiview}
	
	\item MvMDA \cite{cao2017generalized}
	
	\item MULDA and MULDA-m with modifications  \cite{sun2015multiview}
	
	\item OGMA: the proposed model instantiated from (\ref{op:ouf}) with
	(\ref{eq:phi_gma})
	
	\item OMLDA: the proposed model instantiated from (\ref{op:ouf}) with 
	(\ref{eq:phi_gma-1}) and (\ref{eq:mlda-osi})
	
	\item OMvMDA: the proposed model instantiated from (\ref{op:ouf}) with 
	(\ref{eq:mvmda}).
	
\end{itemize}
Except for MvMDA and OMvMDA, all  methods share the same trade-off parameter to balance the pairwise correlation and supervised information.  In our experiments, we set $\alpha_{s,t} = \alpha, \forall s \not = t$ so as to reduce the complexity of model selection and tune $\alpha \in \{0.01, 0.1, 1, 10, 100\}$ for proper balance in supervised setting. To prevent the singularity of matrices $\{\Psi_{s,s}\}$, we add a diagonal matrix with a small value, e.g., $10^{-8}$, to
$\Psi_{s,s}\,\forall s$ for all compared methods.

\begin{landscape}
\begin{table*}
	\caption{Means and standard deviations of accuracy  by the 1-nearest neighbor classifier on embeddings  by $9$  methods on $6$ multi-view datasets over 10 random draws from each dataset (10\% training and 90\% testing). } \label{tab:mvfe}
	\centering \vspace{-0.1in}
	\begin{small}
	\begin{tabular}{@{}lcccccc@{}}
		\hline
		method& mfeat & Ads & Scene15 & Caltech101-7 & Caltech101-20 & Reuters\\ \hline
		GMA& 0.9399 $\pm$ 0.0087 & 0.9261 $\pm$ 0.0176 & 0.6166 $\pm$ 0.0120 & 0.9325 $\pm$ 0.0104 & 0.8130 $\pm$ 0.0106 & \textbf{0.8369 $\pm$ 0.0047} \\
		MLDA& 0.9284 $\pm$ 0.0052 & 0.9309 $\pm$ 0.0079 & 0.5468 $\pm$ 0.0137 & 0.9229 $\pm$ 0.0079 & 0.7659 $\pm$ 0.0117 & 0.7911 $\pm$ 0.0050 \\
		MvMDA& 0.9378 $\pm$ 0.0091 & 0.7796 $\pm$ 0.0360 & 0.6088 $\pm$ 0.0146 & 0.9265 $\pm$ 0.0078 & 0.8050 $\pm$ 0.0132 & {0.8367 $\pm$ 0.0041} \\
		MULDA& 0.9523 $\pm$ 0.0046 & 0.9249 $\pm$ 0.0352 & 0.5789 $\pm$ 0.0121 & 0.9265 $\pm$ 0.0083 & 0.8220 $\pm$ 0.0109 & 0.7529 $\pm$ 0.0081 \\
		MLDA-m& 0.9309 $\pm$ 0.0079 & 0.9418 $\pm$ 0.0061 & 0.5699 $\pm$ 0.0120 & 0.8978 $\pm$ 0.0098 & 0.7377 $\pm$ 0.0114 & 0.7939 $\pm$ 0.0051 \\
		MULDA-m& 0.9512 $\pm$ 0.0044 & 0.9282 $\pm$ 0.0362 & 0.5795 $\pm$ 0.0154 & 0.9259 $\pm$ 0.0099 & 0.8217 $\pm$ 0.0058 & 0.7589 $\pm$ 0.0081 \\
		OGMA (proposed)& \textbf{0.9609 $\pm$ 0.0060} & 0.9412 $\pm$ 0.0114 & 0.7359 $\pm$ 0.0156 & \textbf{0.9501 $\pm$ 0.0052} & 0.8600 $\pm$ 0.0103 & 0.8360 $\pm$ 0.0039 \\
		OMLDA (proposed)& 0.9571 $\pm$ 0.0064 & 0.9410 $\pm$ 0.0115 & \textbf{0.7547 $\pm$ 0.0105} & 0.9498 $\pm$ 0.0048 & \textbf{0.8685 $\pm$ 0.0100} & 0.8353 $\pm$ 0.0037 \\
		OMvMDA (proposed)& 0.9599 $\pm$ 0.0063 & \textbf{0.9423 $\pm$ 0.0103} & 0.7198 $\pm$ 0.0191 & 0.9471 $\pm$ 0.0072 & 0.8428 $\pm$ 0.0102 & 0.8347 $\pm$ 0.0037 \\
		\hline
	\end{tabular}
\end{small}
\end{table*}
\end{landscape}

\begin{figure}
	\centering
	\begin{tabular}{@{}c@{}c@{}}
		\includegraphics[width=0.45\textwidth]{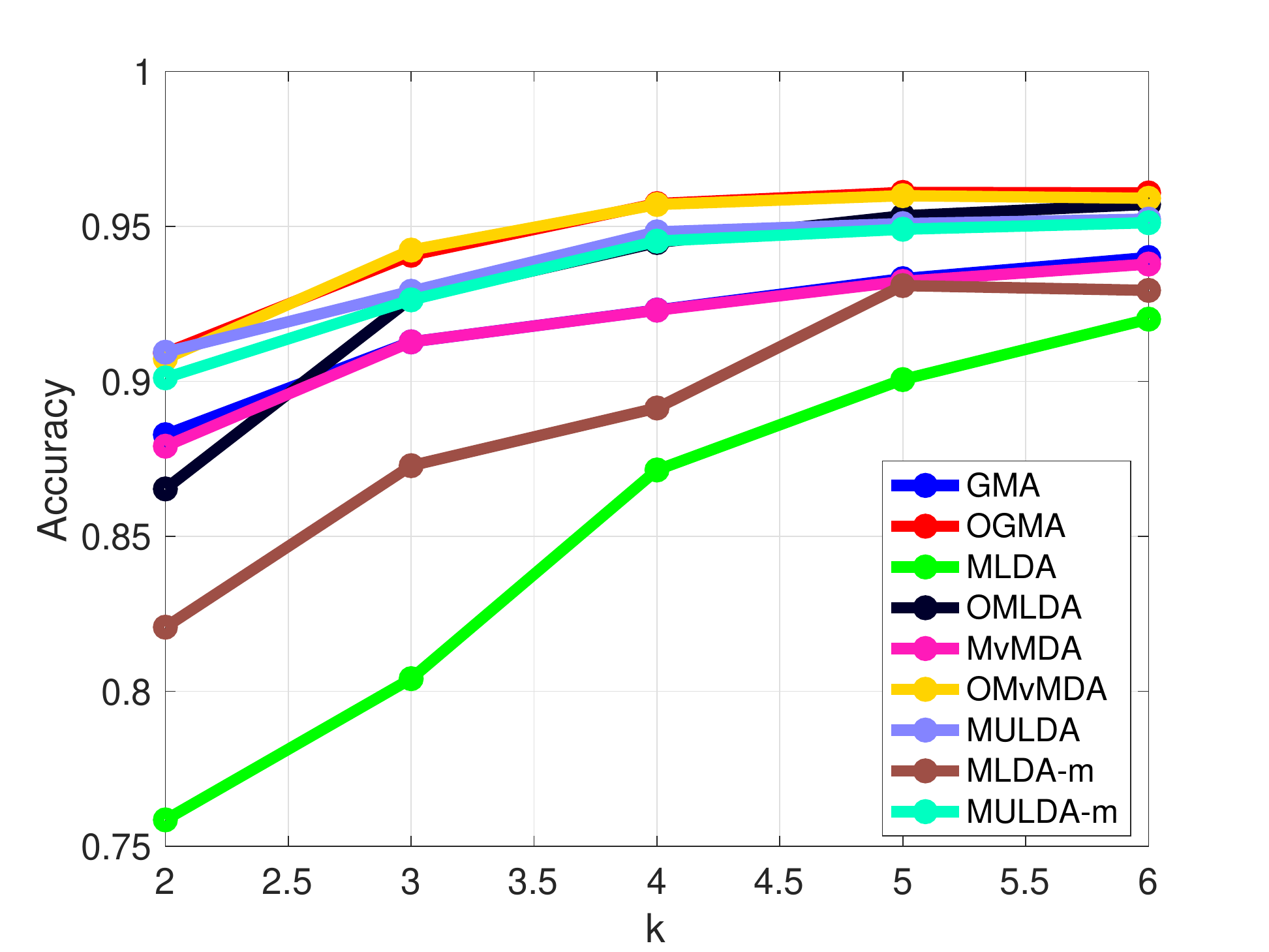} &
		\includegraphics[width=0.45\textwidth]{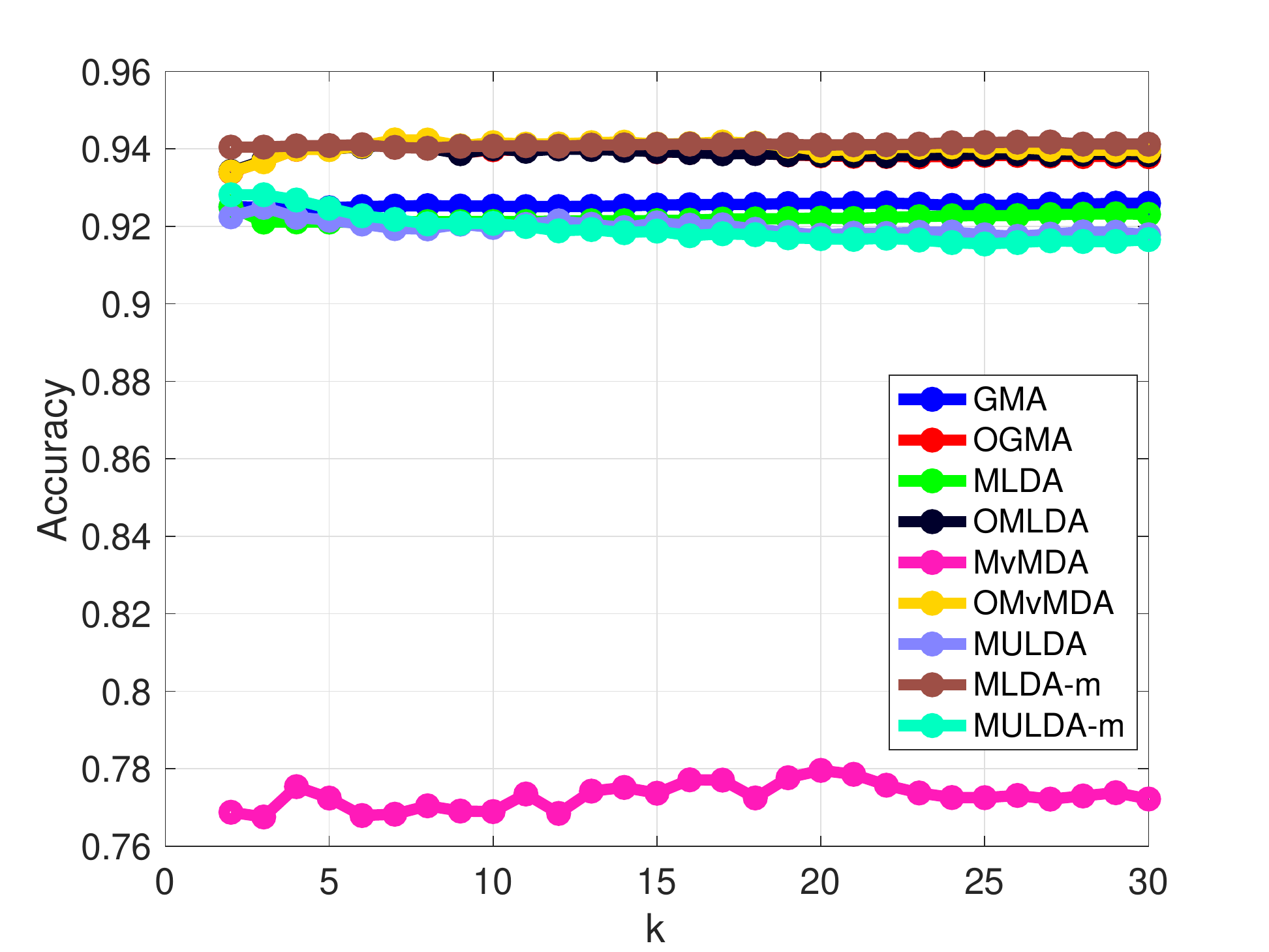} \\
		(a) mfeat & (b) Ads \\
		\includegraphics[width=0.45\textwidth]{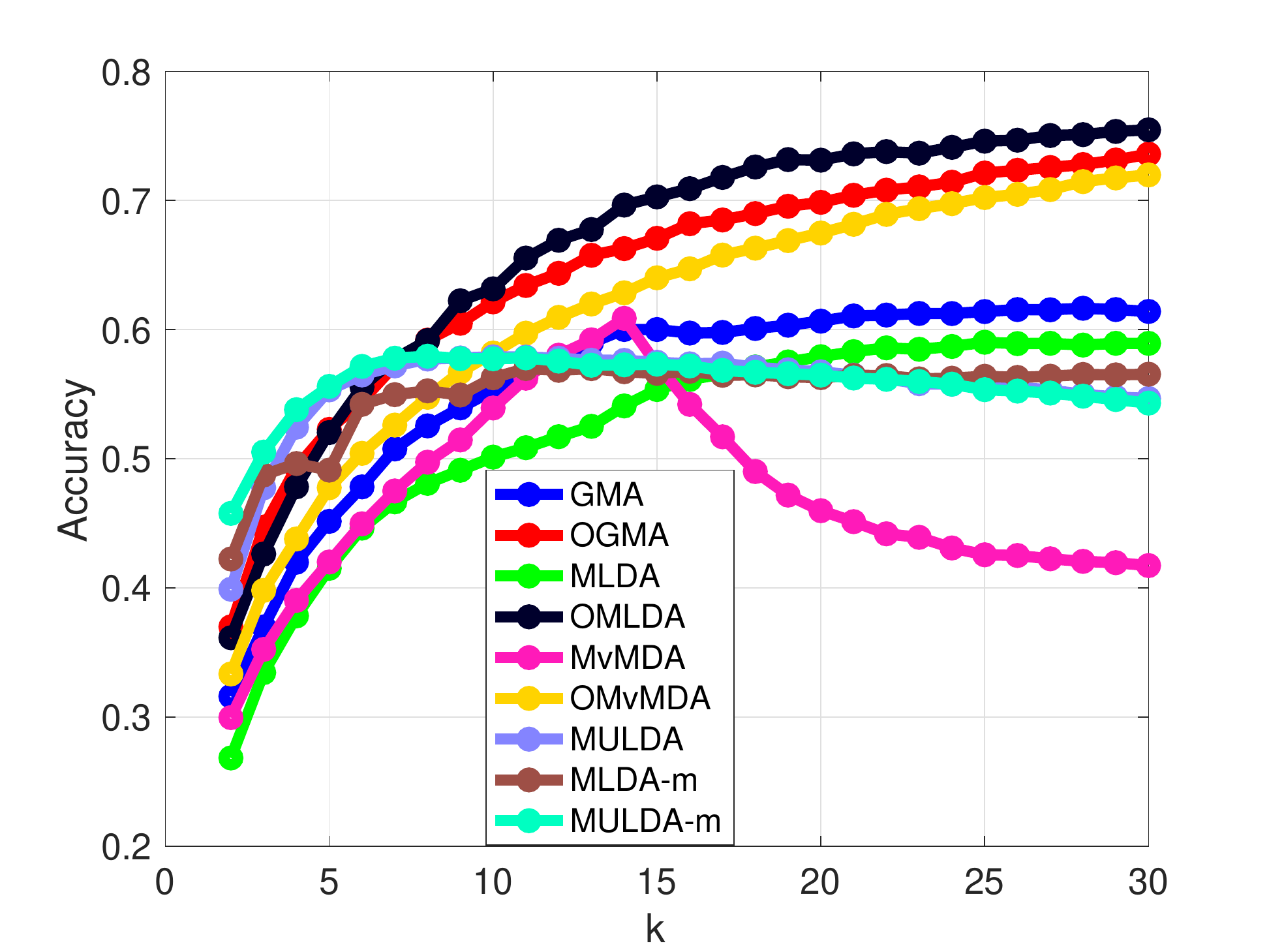} & 
		\includegraphics[width=0.45\textwidth]{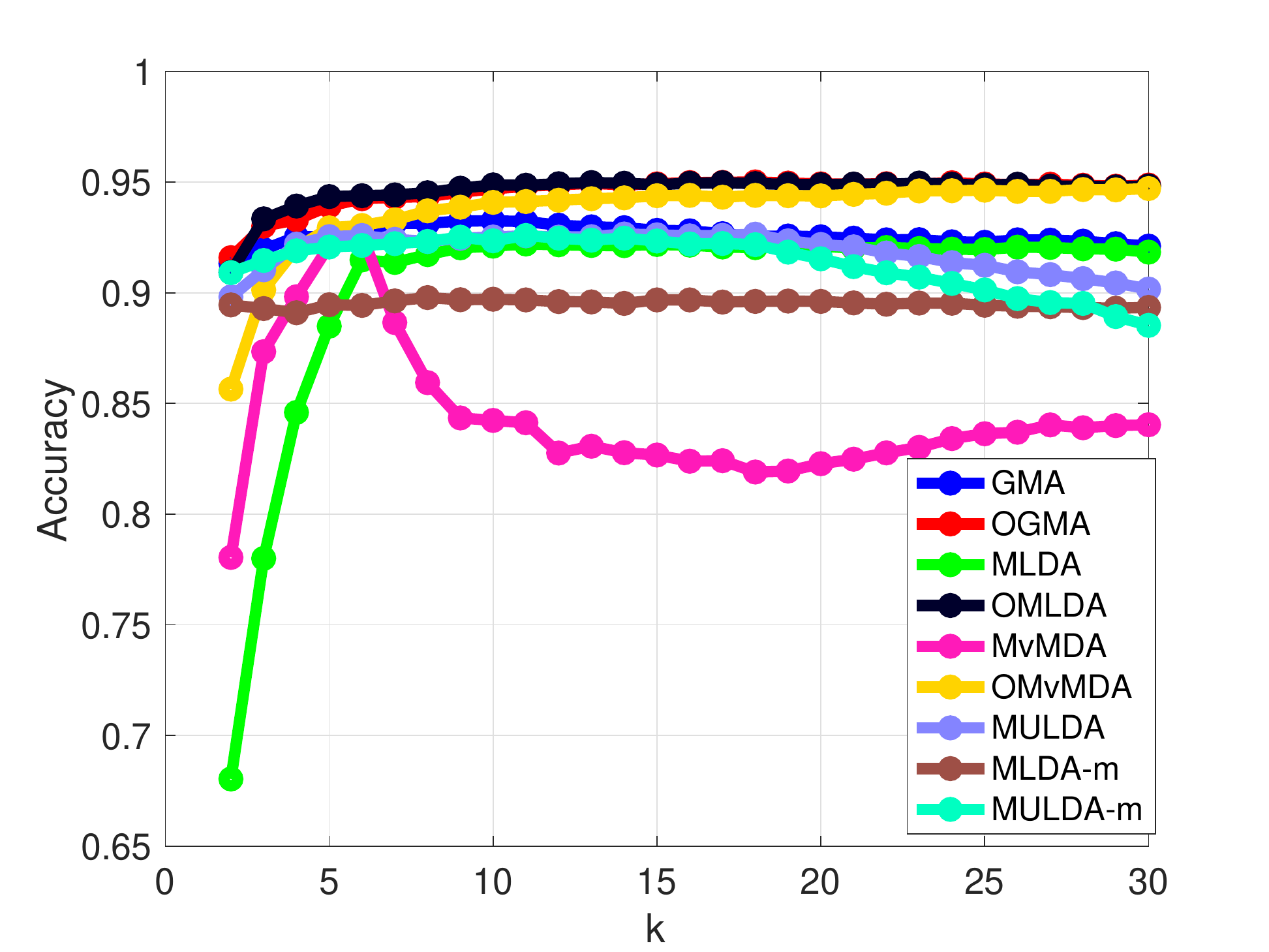} \\
		(c) Scene15 & (d) Caltech101-7\\
		\includegraphics[width=0.45\textwidth]{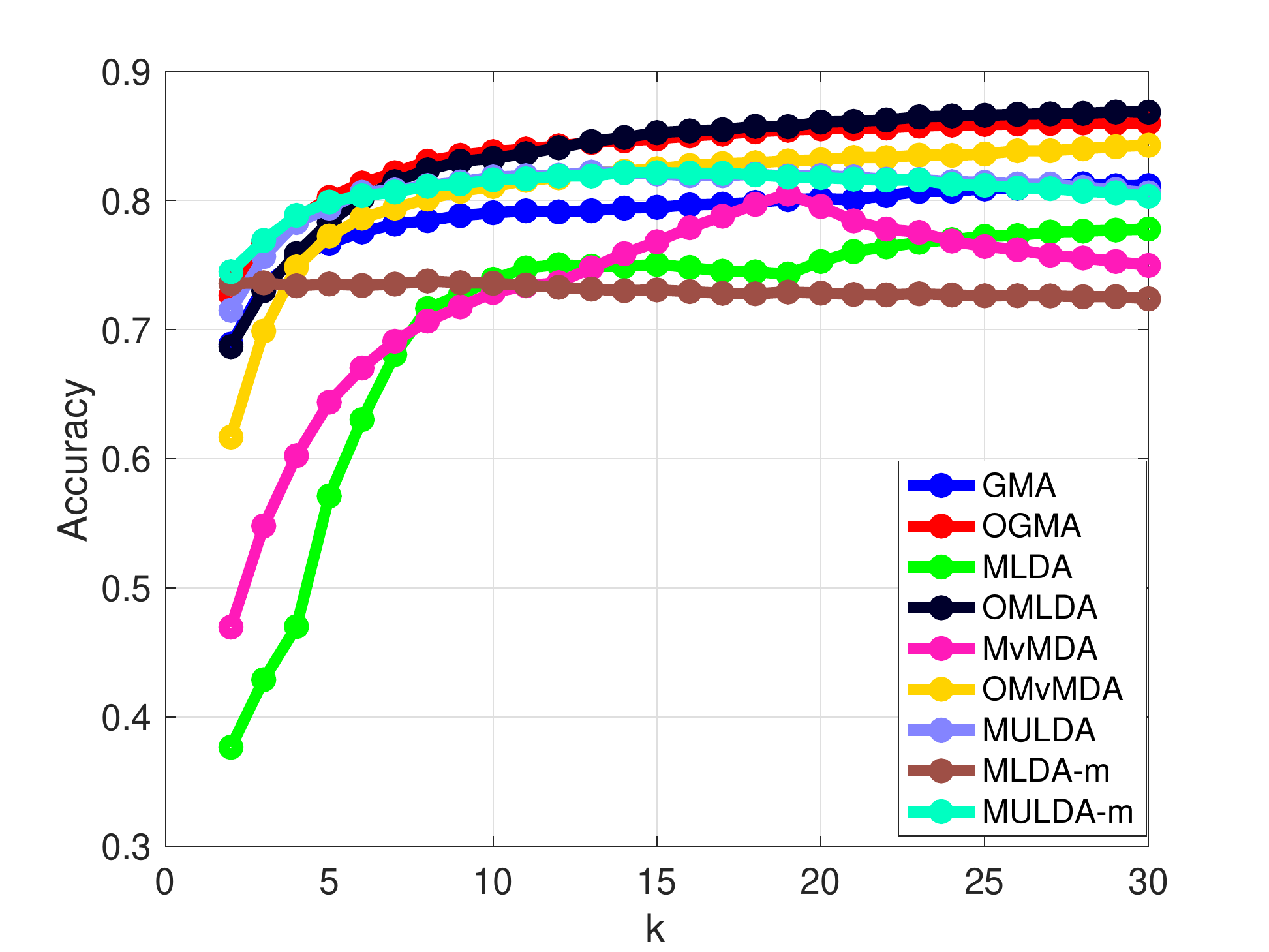} &
		\includegraphics[width=0.45\textwidth]{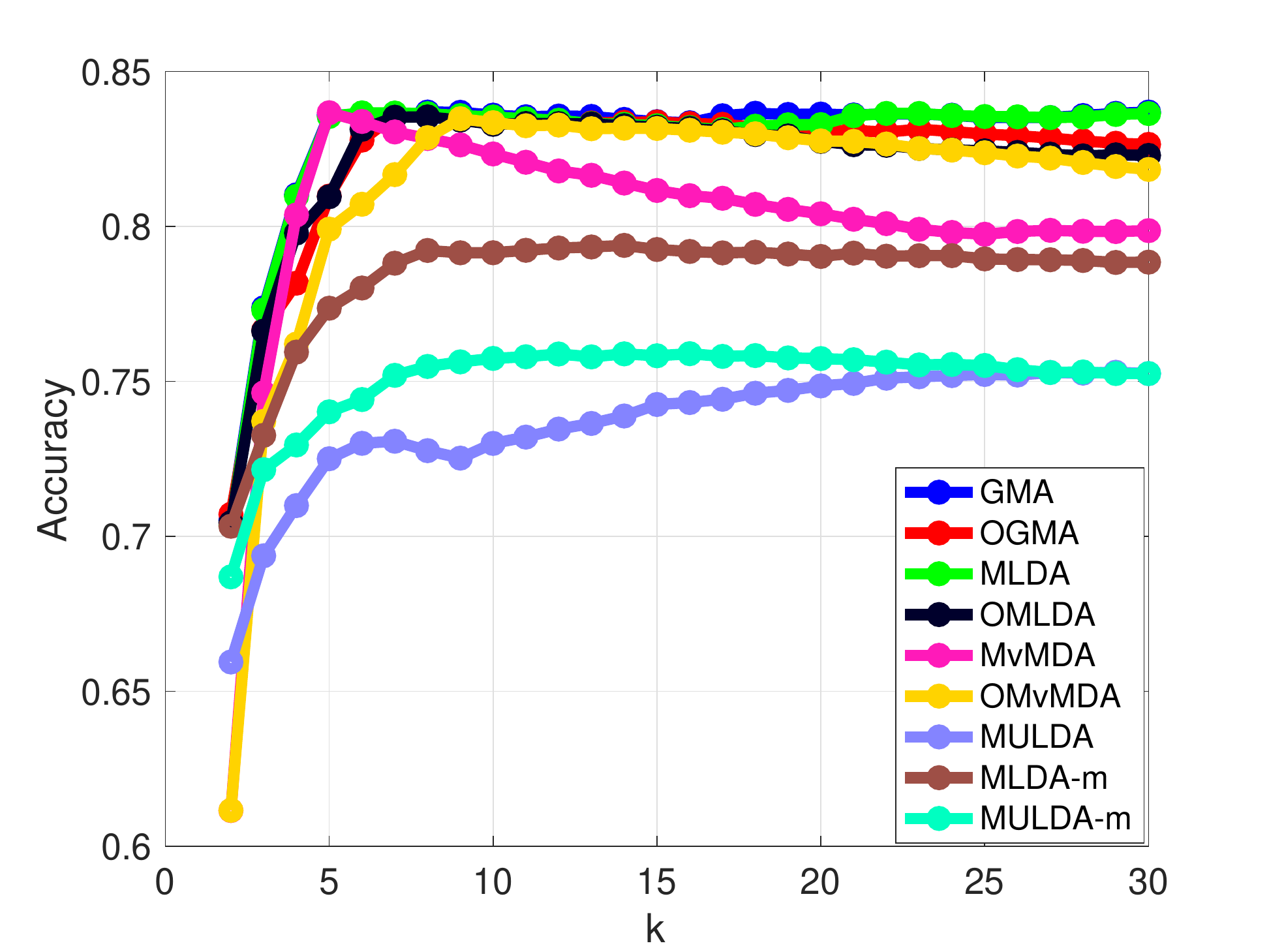}\\
		(e) Caltech101-20 & (f) Reuters\\
	\end{tabular}
	\caption{Classification accuracy of $9$ methods on $6$ datasets over $10$ random splits (10\% training and 90\% testing), as $k$ varies.} \label{fig:mvfs-k}
\end{figure}

\subsubsection{Classification}
To evaluate the learning performance of compared methods, the 1-nearest neighbor classifier as the base classifier is employed. We run each method to learn projection matrices by varying the dimension of the common subspace $k \in [2, 30]$ for all datasets except for  mfeat with $k \in [2, 6]$ due to the smallest view of $6$ features.
We split the data into training and testing with  ratio 10/90.
The learned projection matrices are used to transform both training and testing data into the latent common space, and then classifier is trained and tested in this space. Following  \cite{shen2015orthogonal,Zhang2020Self,wang2020orthogonal}, the serial feature fusion strategy is employed by concatenating projected features from all views.
Classification accuracy is used to measure the learning performance. Experimental results are reported in terms of the average and standard deviation over 10 randomly drawn splits.

Table~\ref{tab:mvfe} shows the best results of $9$ compared methods on $6$ multi-view datasets with $10\%$ training and $90\%$ testing over all tested $k$s and $\alpha$s (the analysis on parameter sensitivity and training sample size will be discussed in subsections \ref{sec:parameer} and \ref{sec:sample-size}, respectively). From Table~\ref{tab:mvfe}, we have the following  observations: (i) our proposed models instantiated from (\ref{op:ouf}) generally outperform their counterparts that solve some relaxed generalized eigenvalue problems. Although GMA produces the best results on Reuters, the differences compared to each of the three proposed methods are all marginal; (ii) three proposed models demonstrate best results on different datasets, while OGMA and OMLDA perform consistently better than OMvMDA on five of six datasets. This empirically shows that the model hypothesis in each model is data-dependent.

\subsubsection{Parameter Sensitivity Analysis} \label{sec:parameer}
The sensitivity analyses on parameters $k$ and $\alpha$ are performed by  varying one of them while recording
the best average accuracy over the other within its testing range.

Figure~\ref{fig:mvfs-k} shows the results of $9$ methods  on six datasets as $k$ varies. Most compared methods demonstrate the increasing trend when $k$ increases. The proposed methods produce consistently better accuracies than others. On Ads,  Caltech101-7 and Reuters, our methods show the saturation on accuracy, while MvMDA shows a significant drop after the certain $k$ on four of six datasets.

\begin{figure}
	\centering
	\begin{tabular}{@{}c@{}c@{}}
		\includegraphics[width=0.45\textwidth]{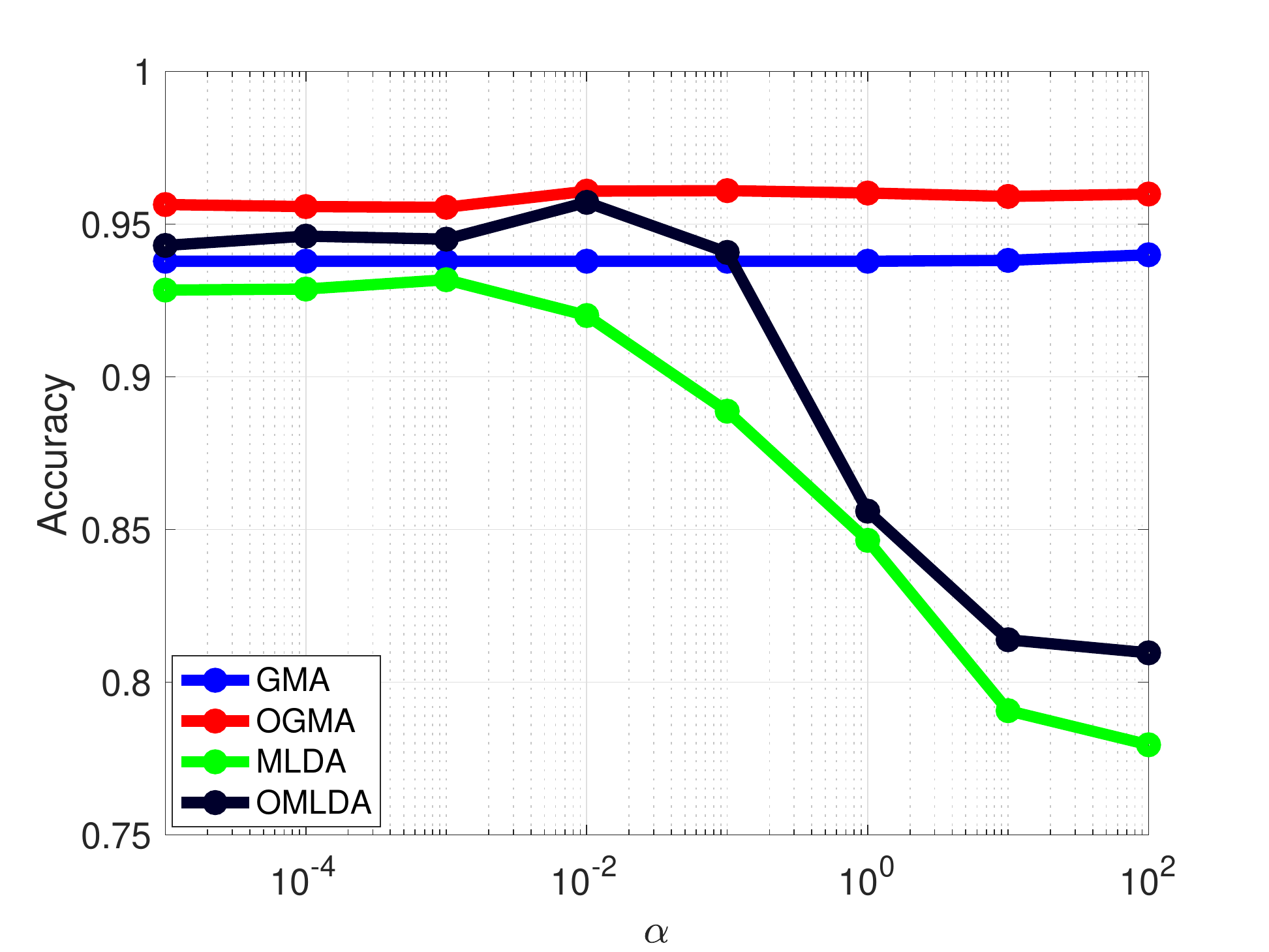} & 
		\includegraphics[width=0.45\textwidth]{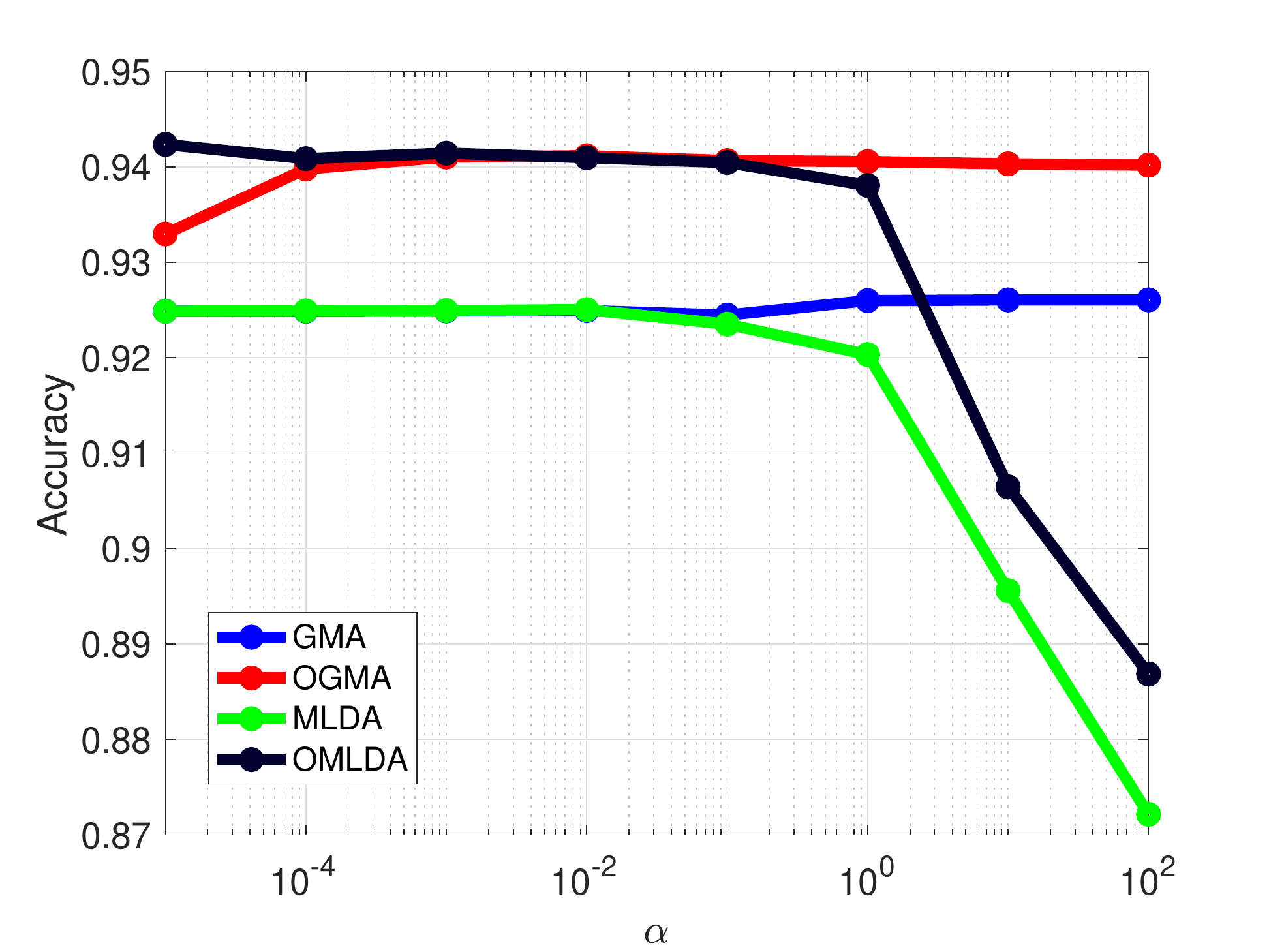} \\
		(a) mfeat & (b) Ads\\
	\end{tabular}
	\caption{Classification accuracy  by $4$ methods on mfeat and Ads  over $10$ random splits (10\% training and 90\% testing), as $\alpha$ varies in $[10^{-5}, 10^2]$.} \label{fig:mvfs-alpha}
\end{figure}

We further investigate the impact of parameter $\alpha$ on GMA, OGMA, MLDA and OMLDA except MvMDA and OMvMDA  since both methods does not contain parameter $\alpha$. In Figure~\ref{fig:mvfs-alpha}, GMA and OGMA demonstrates quite robust to $\alpha$, and the best accuracy can be obtained around $\alpha=10^{-2}$. However, MLDA and OMLDA are quite sensitive to $\alpha$ and the accuracy decreases significantly especially for $\alpha > 0.1$. These observations imply that more contribution from pairwise correlation may hurt MLDA and OMLDA, but no noticeable impact on GMA and OGMA. Over all tested $\alpha$s, our proposed methods outperform their counterparts.

\subsubsection{Impact on Training Sample Size} \label{sec:sample-size}
We further show the impact of training sample size on the compared methods by varying the ratio of training data from $10\%$ to $60\%$. The best average results over $10$ randomly drawn splits are reported. Fig. \ref{fig:mvfs-sample-size} shows the accuracy improves when the training ratio is increasing on Ads and Caltech101-7. It is observed that (i) all methods show better performance when training sample size increases, (ii) our proposed methods show consistently better results than others, and (iii) all methods converge to similar results when training sample size becomes very large except MvMDA.

\begin{figure}
	\begin{tabular}{@{}c@{}c@{}}
		\includegraphics[width=0.45\textwidth]{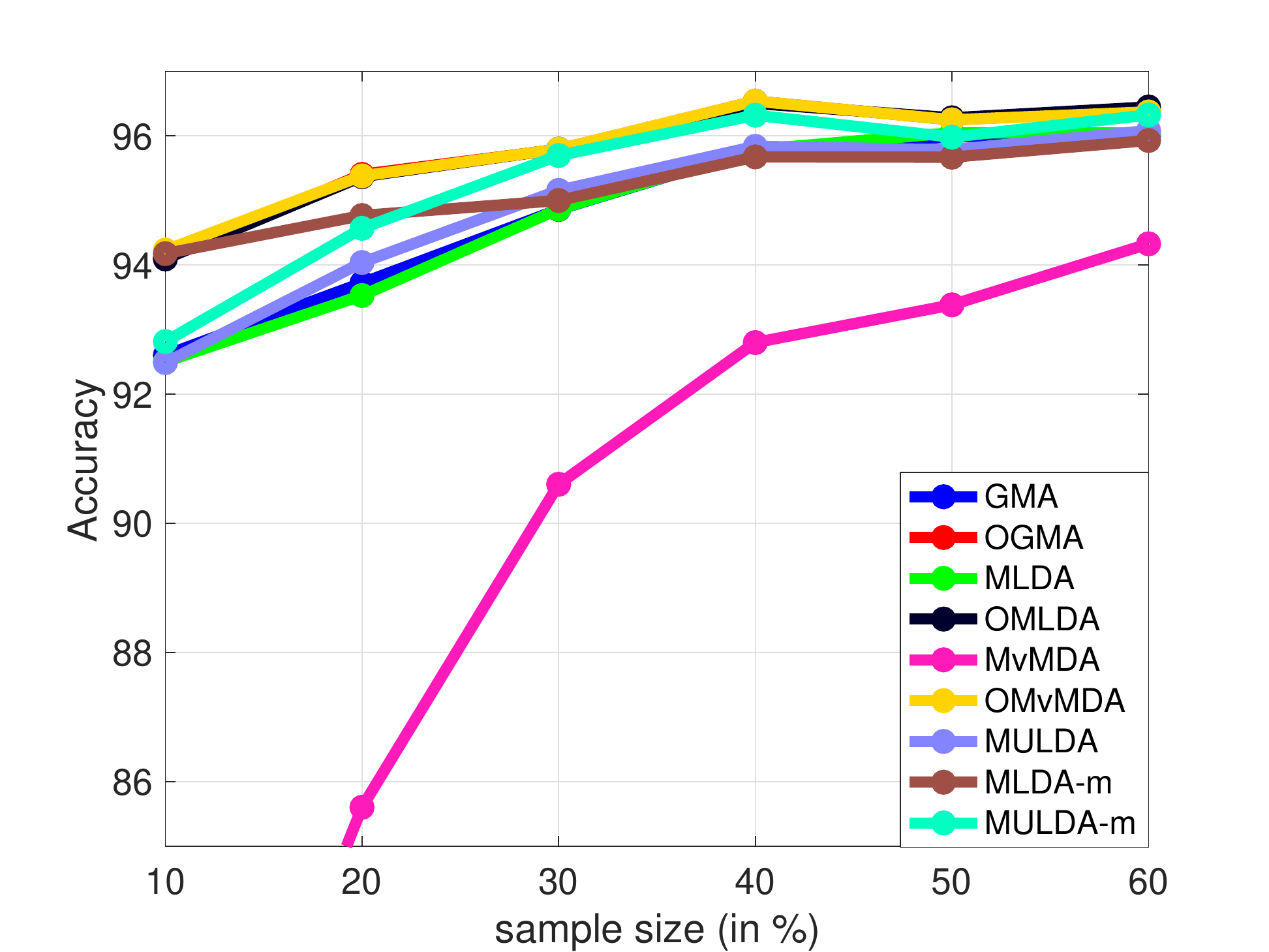} & \includegraphics[width=0.45\textwidth]{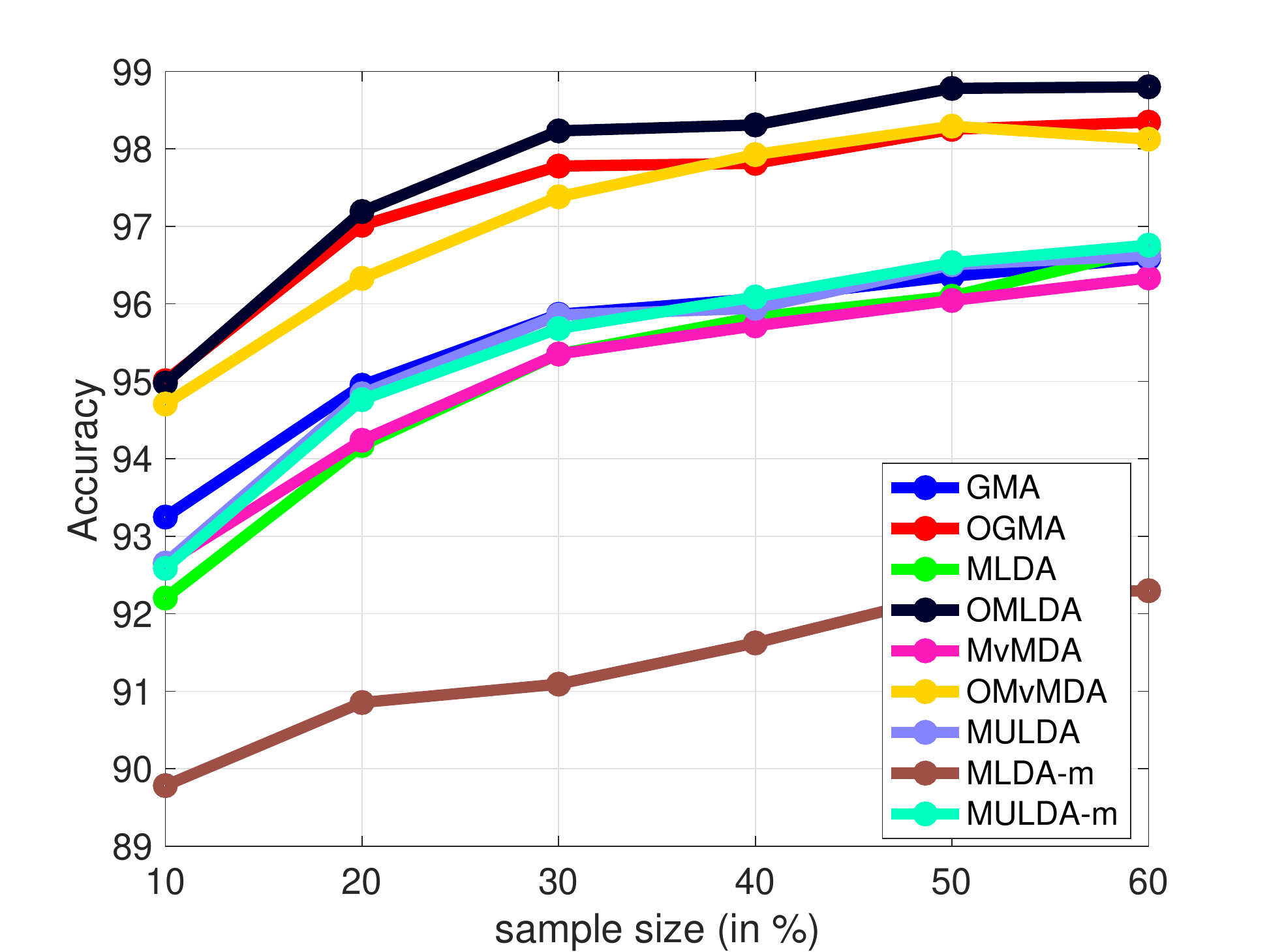}\\
		(a) Ads & (b) Caltech101-7\\
	\end{tabular}
	\caption{Classification accuracy  by all $9$ methods on Ads and Scene15 as the ratio of training data varies from $10\%$ to $60\%$.} \label{fig:mvfs-sample-size}
\end{figure}

\subsection{Multi-view Multi-label Classification}

\begin{table}
	\caption{Multi-view multi-label datasets for classification} \label{tab:mvml}
	\centering \vspace{-0.1in}
	\begin{tabular}{cccc}
		\hline
		& samples & labels & views \\ \hline
		emotions & 593 & 6 & 2 \\
		Corel5k & 4999 & 260 & 7\\
		espgame & 20770 & 268 & 7 \\
		iaprtc12 & 19627 & 291 & 7 \\
		mirflickr & 25000 & 38 & 7 \\
		pascal07 & 9963 & 20 & 7 \\
		\hline
	\end{tabular}
\end{table}

\begin{table*}[!ht]
	\caption{Results in terms of the 5 metrics on the six datasets (10\% for training and 90\% for testing over 10 random splits). Best results are in bold.} \label{tab:mvml-metric}
	\centering \vspace{-0.1in}
	\begin{tiny}
	\begin{tabular}{@{}c|lccccc@{}}
		\hline
		&method& Hamming Loss $\downarrow$ & Ranking Loss $\downarrow$ & One Error $\downarrow$ & Coverage $\downarrow$ &Average Precision $\uparrow$\\ \hline
		
		\multirow{7}{*}{emotions}& view-1& 0.3060 $\pm$ 0.0156& \textbf{0.3038 $\pm$ 0.0195}& 0.4672 $\pm$ 0.0312& 2.4903 $\pm$ 0.1790& 0.6647 $\pm$ 0.1319\\
		& view-2& 0.3403 $\pm$ 0.0247& 0.4392 $\pm$ 0.0173& 0.5949 $\pm$ 0.0422& 3.1069 $\pm$ 0.0625& 0.5678 $\pm$ 0.0625\\
		& concat& 0.3046 $\pm$ 0.0155& 0.3596 $\pm$ 0.0279& 0.4869 $\pm$ 0.0359& 2.8039 $\pm$ 0.1208& 0.6290 $\pm$ 0.1091\\
		& MCCA& 0.3661 $\pm$ 0.0267& 0.4554 $\pm$ 0.0188& 0.6399 $\pm$ 0.0321& 3.1830 $\pm$ 0.1291& 0.5468 $\pm$ 0.1291\\
		& OM$^2$CCA& 0.3006 $\pm$ 0.0124& 0.3249 $\pm$ 0.0346& 0.4948 $\pm$ 0.0488& 2.5740 $\pm$ 0.1779& 0.6492 $\pm$ 0.1777\\
		& HSIC-GEV& 0.3646 $\pm$ 0.0241& 0.4547 $\pm$ 0.0105& 0.6223 $\pm$ 0.0466& 3.0798 $\pm$ 0.1888& 0.5553 $\pm$ 0.1888\\
		&OHSIC& \textbf{0.2953 $\pm$ 0.0110}& 0.3079 $\pm$ 0.0248& \textbf{0.4655 $\pm$ 0.0342}& \textbf{2.4850 $\pm$ 0.1222}& \textbf{0.6662 $\pm$ 0.1222}\\\hline
		
		\multirow{12}{*}{Corel5k}& view-1& 0.0131 $\pm$ 0.0001& 0.1684 $\pm$ 0.0031& 0.7153 $\pm$ 0.0147& 95.3444 $\pm$ 1.4930& 0.2637 $\pm$ 1.5828\\
		& view-2& 0.0131 $\pm$ 0.0001& 0.1672 $\pm$ 0.0034& 0.7031 $\pm$ 0.0110& 94.9287 $\pm$ 1.6843& 0.2689 $\pm$ 1.6548\\
		& view-3& 0.0131 $\pm$ 0.0001& 0.1664 $\pm$ 0.0033& 0.6606 $\pm$ 0.0072& 95.3894 $\pm$ 1.7478& 0.2862 $\pm$ 1.7478\\
		& view-4& 0.0131 $\pm$ 0.0000& 0.1724 $\pm$ 0.0030& 0.7187 $\pm$ 0.0154& 97.7932 $\pm$ 1.6951& 0.2592 $\pm$ 1.6252\\
		& view-5& 0.0131 $\pm$ 0.0000& 0.1709 $\pm$ 0.0027& 0.7366 $\pm$ 0.0107& 96.2485 $\pm$ 1.4062& 0.2502 $\pm$ 1.4062\\
		& view-6& 0.0131 $\pm$ 0.0000& 0.1707 $\pm$ 0.0025& 0.7365 $\pm$ 0.0137& 96.2007 $\pm$ 1.3439& 0.2520 $\pm$ 1.3704\\
		& view-7& 0.0131 $\pm$ 0.0000& 0.1691 $\pm$ 0.0026& 0.6906 $\pm$ 0.0065& 96.3108 $\pm$ 1.3974& 0.2716 $\pm$ 1.5536\\
		& concat& 0.0131 $\pm$ 0.0001& \textbf{0.1597 $\pm$ 0.0040}& 0.6591 $\pm$ 0.0135& 92.5057 $\pm$ 2.1126& 0.2999 $\pm$ 2.1126\\
		& MCCA& 0.0131 $\pm$ 0.0000& 0.2013 $\pm$ 0.0020& 0.7799 $\pm$ 0.0115& 104.9648 $\pm$ 1.4837& 0.2121 $\pm$ 1.4837\\
		& OM$^2$CCA&\textbf{0.0130 $\pm$ 0.0000}& 0.1668 $\pm$ 0.0028& 0.6982 $\pm$ 0.0106& 94.7535 $\pm$ 1.4380& 0.2729 $\pm$ 1.4651\\
		& HSIC-GEV& 0.0131 $\pm$ 0.0000& 0.1933 $\pm$ 0.0031& 0.7885 $\pm$ 0.0161& 104.6444 $\pm$ 1.5763& 0.2011 $\pm$ 1.6329\\
		& OHSIC& \textbf{0.0130 $\pm$ 0.0001}& 0.1601 $\pm$ 0.0026& \textbf{0.6374 $\pm$ 0.0126}& \textbf{91.8414 $\pm$ 1.5051}& \textbf{0.3022 $\pm$ 1.3774}\\
		\hline
		\multirow{12}{*}{iaprtc12}
		& view-1& 0.0196 $\pm$ 0.0000& 0.1871 $\pm$ 0.0012& 0.6746 $\pm$ 0.0047& 142.9013 $\pm$ 0.9209& 0.2216 $\pm$ 0.8681\\
		& view-2& 0.0196 $\pm$ 0.0000& 0.1850 $\pm$ 0.0014& 0.6611 $\pm$ 0.0041& 141.9732 $\pm$ 1.0396& 0.2272 $\pm$ 1.1048\\
		& view-3& \textbf{0.0195 $\pm$ 0.0000}& 0.1738 $\pm$ 0.0012& 0.6262 $\pm$ 0.0066& 137.2026 $\pm$ 0.6685& 0.2535 $\pm$ 0.6685\\
		& view-4& 0.0196 $\pm$ 0.0000& 0.1768 $\pm$ 0.0009& 0.6375 $\pm$ 0.0024& 138.5784 $\pm$ 0.6618& 0.2508 $\pm$ 0.6618\\
		& view-5& 0.0197 $\pm$ 0.0000& 0.1879 $\pm$ 0.0010& 0.6902 $\pm$ 0.0039& 143.2122 $\pm$ 0.7937& 0.2179 $\pm$ 0.7937\\
		& view-6& 0.0197 $\pm$ 0.0000& 0.1862 $\pm$ 0.0013& 0.6802 $\pm$ 0.0033& 142.4131 $\pm$ 0.9414& 0.2233 $\pm$ 0.9414\\
		& view-7& 0.0196 $\pm$ 0.0000& 0.1720 $\pm$ 0.0013& 0.6341 $\pm$ 0.0049& 136.0222 $\pm$ 0.7975& 0.2587 $\pm$ 0.7975\\
		& concat& \textbf{0.0195 $\pm$ 0.0000}& 0.1696 $\pm$ 0.0010& 0.6218 $\pm$ 0.0032& 134.6660 $\pm$ 0.7990& 0.2649 $\pm$ 0.8544\\
		& MCCA& 0.0196 $\pm$ 0.0000& 0.1804 $\pm$ 0.0109& 0.6447 $\pm$ 0.0252& 140.0515 $\pm$ 5.4482& 0.2400 $\pm$ 5.0702\\
		& OM$^2$CCA& 0.0196 $\pm$ 0.0000& 0.1709 $\pm$ 0.0011& 0.6220 $\pm$ 0.0030& 135.2006 $\pm$ 0.8948& 0.2559 $\pm$ 0.9850\\
		& HSIC-GEV& \textbf{0.0195 $\pm$ 0.0001}& \textbf{0.1648 $\pm$ 0.0022}& \textbf{0.5893 $\pm$ 0.0035}& \textbf{132.1792 $\pm$ 1.2844}& \textbf{0.2776 $\pm$ 1.2844}\\
		& OHSIC& \textbf{0.0195 $\pm$ 0.0000}& 0.1673 $\pm$ 0.0009& 0.6078 $\pm$ 0.0025& 133.6776 $\pm$ 0.7331& 0.2661 $\pm$ 0.7209\\
		
		\hline
		
		\multirow{12}{*}{espgame} & view-1& \textbf{0.0174 $\pm$ 0.0000}& 0.2150 $\pm$ 0.0011& 0.6762 $\pm$ 0.0052& 134.8974 $\pm$ 0.5372& 0.2235 $\pm$ 0.5178\\
		& view-2& \textbf{0.0174 $\pm$ 0.0000}& 0.2144 $\pm$ 0.0013& 0.6766 $\pm$ 0.0058& 134.6899 $\pm$ 0.6207& 0.2238 $\pm$ 0.6207\\
		& view-3& 0.0175 $\pm$ 0.0000& 0.2035 $\pm$ 0.0009& 0.7213 $\pm$ 0.0049& 129.8373 $\pm$ 0.4775& 0.2185 $\pm$ 0.6298\\
		& view-4& 0.0175 $\pm$ 0.0000& 0.2030 $\pm$ 0.0012& 0.7169 $\pm$ 0.0032& 129.1738 $\pm$ 0.7794& 0.2201 $\pm$ 0.7794\\
		& view-5& \textbf{0.0174 $\pm$ 0.0000}& 0.2157 $\pm$ 0.0009& 0.6668 $\pm$ 0.0051& 135.3101 $\pm$ 0.4779& 0.2262 $\pm$ 0.4854\\
		& view-6& \textbf{0.0174 $\pm$ 0.0000}& 0.2159 $\pm$ 0.0009& 0.6687 $\pm$ 0.0033& 135.4435 $\pm$ 0.4592& 0.2252 $\pm$ 0.5143\\
		& view-7& 0.0175 $\pm$ 0.0000& 0.2054 $\pm$ 0.0006& 0.7279 $\pm$ 0.0049& 130.7208 $\pm$ 0.5104& 0.2160 $\pm$ 0.5104\\
		& concat& 0.0175 $\pm$ 0.0000& \textbf{0.2015 $\pm$ 0.0010}& 0.6989 $\pm$ 0.0063& \textbf{128.8904 $\pm$ 0.6606}& 0.2283 $\pm$ 0.6859\\
		& MCCA& \textbf{0.0174 $\pm$ 0.0001}& 0.2136 $\pm$ 0.0061& 0.6784 $\pm$ 0.0518& 134.1460 $\pm$ 1.9614& 0.2249 $\pm$ 1.9531\\
		& OM$^2$CCA& \textbf{0.0174 $\pm$ 0.0000}& 0.2076 $\pm$ 0.0008& 0.6283 $\pm$ 0.0040& 132.0874 $\pm$ 0.5329& 0.2454 $\pm$ 0.5074\\
		& HSIC-GEV& \textbf{0.0174 $\pm$ 0.0000}& 0.2068 $\pm$ 0.0010& 0.6236 $\pm$ 0.0053& 131.9247 $\pm$ 0.6241& 0.2481 $\pm$ 0.6241\\
		& OHSIC& \textbf{0.0174 $\pm$ 0.0000}& 0.2061 $\pm$ 0.0010& \textbf{0.6207 $\pm$ 0.0053}& 131.5208 $\pm$ 0.5965& \textbf{0.2495 $\pm$ 0.5965}\\
		\hline
		\multirow{12}{*}{mirflickr} & view-1& 0.1224 $\pm$ 0.0004& 0.1798 $\pm$ 0.0014& 0.4960 $\pm$ 0.0041& 15.3929 $\pm$ 0.0753& 0.4993 $\pm$ 0.0734\\
		& view-2& 0.1220 $\pm$ 0.0002& 0.1780 $\pm$ 0.0014& 0.4872 $\pm$ 0.0033& 15.3257 $\pm$ 0.0511& 0.5054 $\pm$ 0.0613\\
		& view-3& 0.1177 $\pm$ 0.0004& 0.1614 $\pm$ 0.0012& 0.4259 $\pm$ 0.0031& 14.5379 $\pm$ 0.0488& 0.5451 $\pm$ 0.0488\\
		& view-4& 0.1186 $\pm$ 0.0003& 0.1672 $\pm$ 0.0013& 0.4414 $\pm$ 0.0046& 14.8891 $\pm$ 0.0513& 0.5329 $\pm$ 0.0513\\
		& view-5& 0.1227 $\pm$ 0.0004& 0.1815 $\pm$ 0.0013& 0.5053 $\pm$ 0.0040& 15.4924 $\pm$ 0.0535& 0.4933 $\pm$ 0.0641\\
		& view-6& 0.1226 $\pm$ 0.0003& 0.1805 $\pm$ 0.0011& 0.5027 $\pm$ 0.0045& 15.4430 $\pm$ 0.0631& 0.4963 $\pm$ 0.0631\\
		& view-7& 0.1171 $\pm$ 0.0006& 0.1592 $\pm$ 0.0016& 0.4263 $\pm$ 0.0036& 14.3775 $\pm$ 0.0864& 0.5471 $\pm$ 0.0864\\
		& concat& 0.1170 $\pm$ 0.0003& 0.1617 $\pm$ 0.0015& 0.4192 $\pm$ 0.0035& 14.5937 $\pm$ 0.0706& 0.5470 $\pm$ 0.0667\\
		& MCCA& 0.1176 $\pm$ 0.0006& 0.1612 $\pm$ 0.0017& 0.4178 $\pm$ 0.0061& 14.5641 $\pm$ 0.0847& 0.5483 $\pm$ 0.0847\\
		& OM$^2$CCA& 0.1181 $\pm$ 0.0005& 0.1626 $\pm$ 0.0016& 0.4202 $\pm$ 0.0052& 14.6236 $\pm$ 0.0982& 0.5482 $\pm$ 0.1014\\
		& HSIC-GEV& \textbf{0.1131 $\pm$ 0.0008}& \textbf{0.1507 $\pm$ 0.0011}& \textbf{0.3460 $\pm$ 0.0024}& \textbf{13.9975 $\pm$ 0.0970}& \textbf{0.5868 $\pm$ 0.128}0\\
		& OHSIC& 0.1169 $\pm$ 0.0003& 0.1586 $\pm$ 0.0014& 0.4127 $\pm$ 0.0058& 14.4238 $\pm$ 0.0572& 0.5530 $\pm$ 0.0572\\
		\hline
		\multirow{12}{*}{pascal07} & view-1& 0.0730 $\pm$ 0.0005& 0.2786 $\pm$ 0.0049& 0.5946 $\pm$ 0.0029& 6.9247 $\pm$ 0.1447& 0.4425 $\pm$ 0.1447\\
		& view-2& 0.0729 $\pm$ 0.0002& 0.2708 $\pm$ 0.0046& 0.5950 $\pm$ 0.0031& 6.7332 $\pm$ 0.1332& 0.4466 $\pm$ 0.1332\\
		& view-3& 0.0715 $\pm$ 0.0005& 0.2373 $\pm$ 0.0033& 0.5819 $\pm$ 0.0044& 5.9969 $\pm$ 0.1021& 0.4800 $\pm$ 0.0754\\
		& view-4& 0.0702 $\pm$ 0.0003& 0.2328 $\pm$ 0.0041& 0.5656 $\pm$ 0.0042& 5.8909 $\pm$ 0.0956& 0.4928 $\pm$ 0.0956\\
		& view-5& 0.0716 $\pm$ 0.0004& 0.2714 $\pm$ 0.0031& 0.5941 $\pm$ 0.0026& 6.7623 $\pm$ 0.0936& 0.4482 $\pm$ 0.0936\\
		& view-6& 0.0719 $\pm$ 0.0006& 0.2692 $\pm$ 0.0042& 0.5945 $\pm$ 0.0022& 6.7054 $\pm$ 0.0993& 0.4498 $\pm$ 0.0993\\
		& view-7& 0.0699 $\pm$ 0.0005& 0.2219 $\pm$ 0.0032& 0.5617 $\pm$ 0.0049& 5.6492 $\pm$ 0.0718& 0.5006 $\pm$ 0.0718\\
		& concat& 0.0700 $\pm$ 0.0003& 0.2268 $\pm$ 0.0045& 0.5634 $\pm$ 0.0061& 5.7465 $\pm$ 0.1130& 0.4996 $\pm$ 0.1130\\
		& MCCA& 0.0691 $\pm$ 0.0002& 0.2183 $\pm$ 0.0029& 0.5700 $\pm$ 0.0054& 5.5241 $\pm$ 0.0599& 0.4991 $\pm$ 0.0599\\
		& OM$^2$CCA& 0.0694 $\pm$ 0.0003& 0.2179 $\pm$ 0.0037& 0.5723 $\pm$ 0.0060& 5.5003 $\pm$ 0.0788& 0.4960 $\pm$ 0.0714\\
		& HSIC-GEV& \textbf{0.0678 $\pm$ 0.0004}& 0.2185 $\pm$ 0.0052& \textbf{0.5569 $\pm$ 0.004}6& 5.4652 $\pm$ 0.1018& \textbf{0.5088 $\pm$ 0.1018}\\
		& OHSIC& \textbf{0.0678 $\pm$ 0.0004}& \textbf{0.2122 $\pm$ 0.0034}& 0.5604 $\pm$ 0.0046& \textbf{5.3753 $\pm$ 0.0679}& 0.5073 $\pm$ 0.0525\\
		\hline
	\end{tabular}
	\end{tiny}
\end{table*}

\begin{figure*}
	\centering
	\begin{tabular}{@{}c@{}c@{}}
		\includegraphics[width=0.4\textwidth]{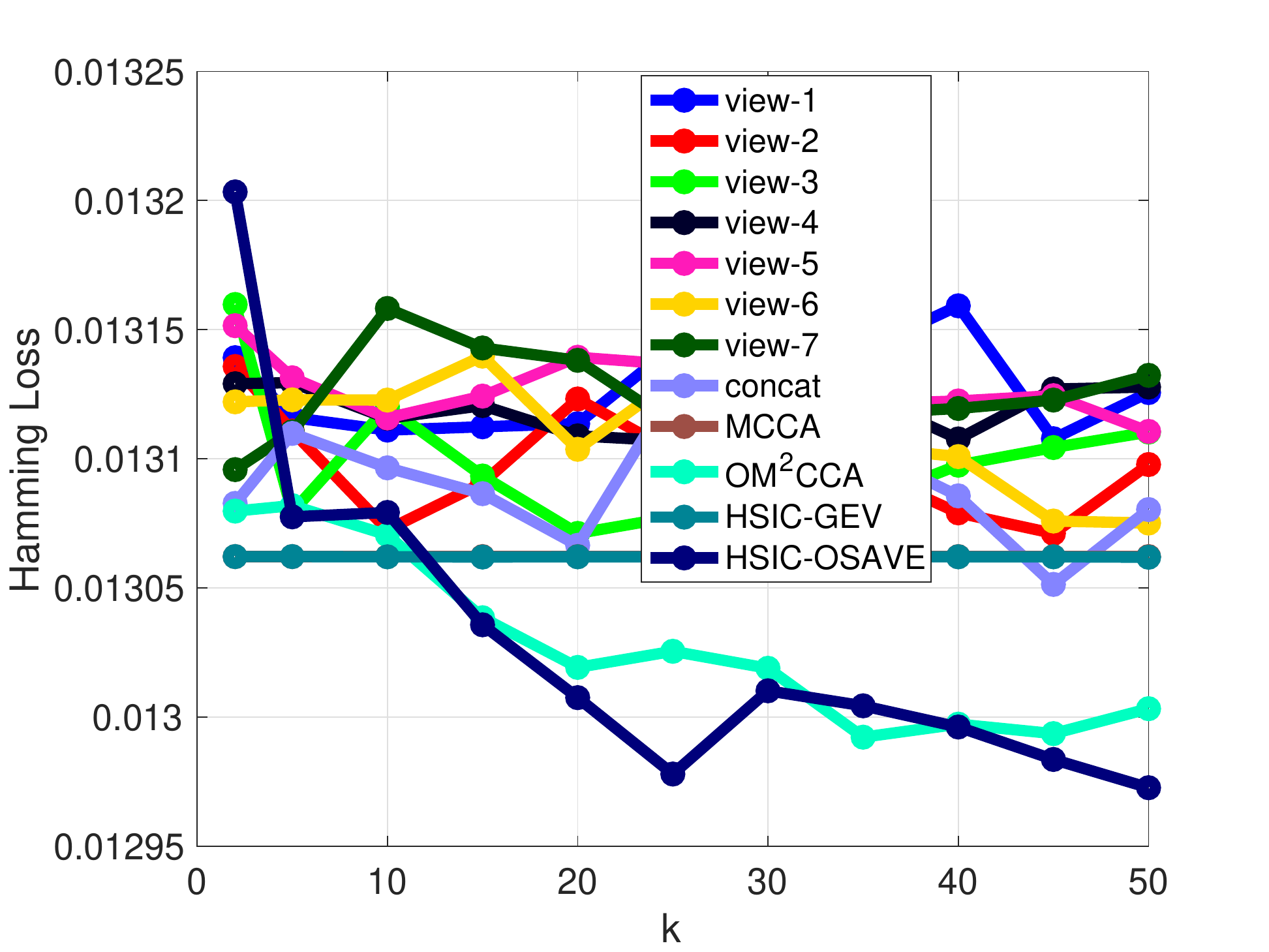} & 		\includegraphics[width=0.4\textwidth]{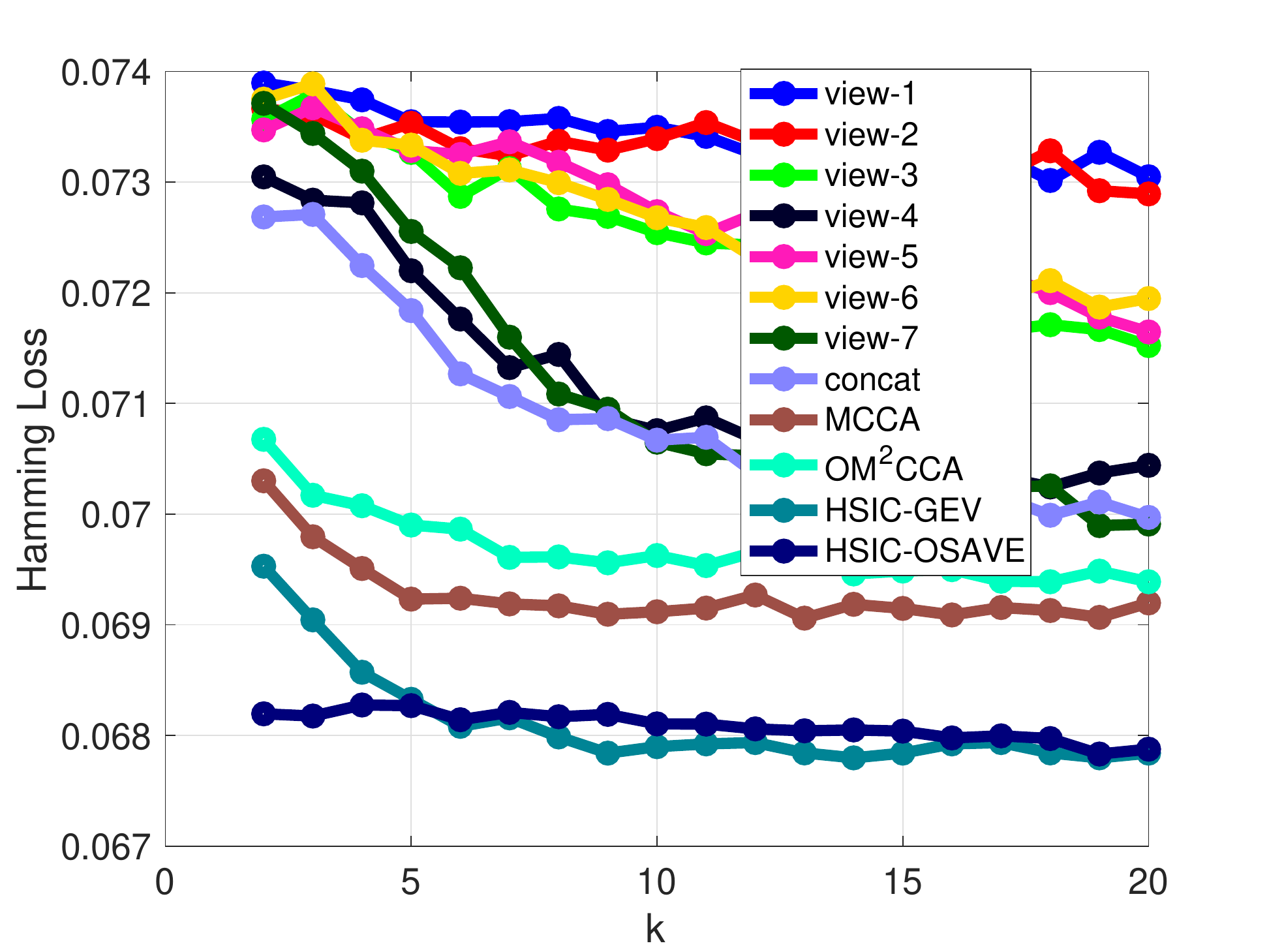}\\
		\includegraphics[width=0.4\textwidth]{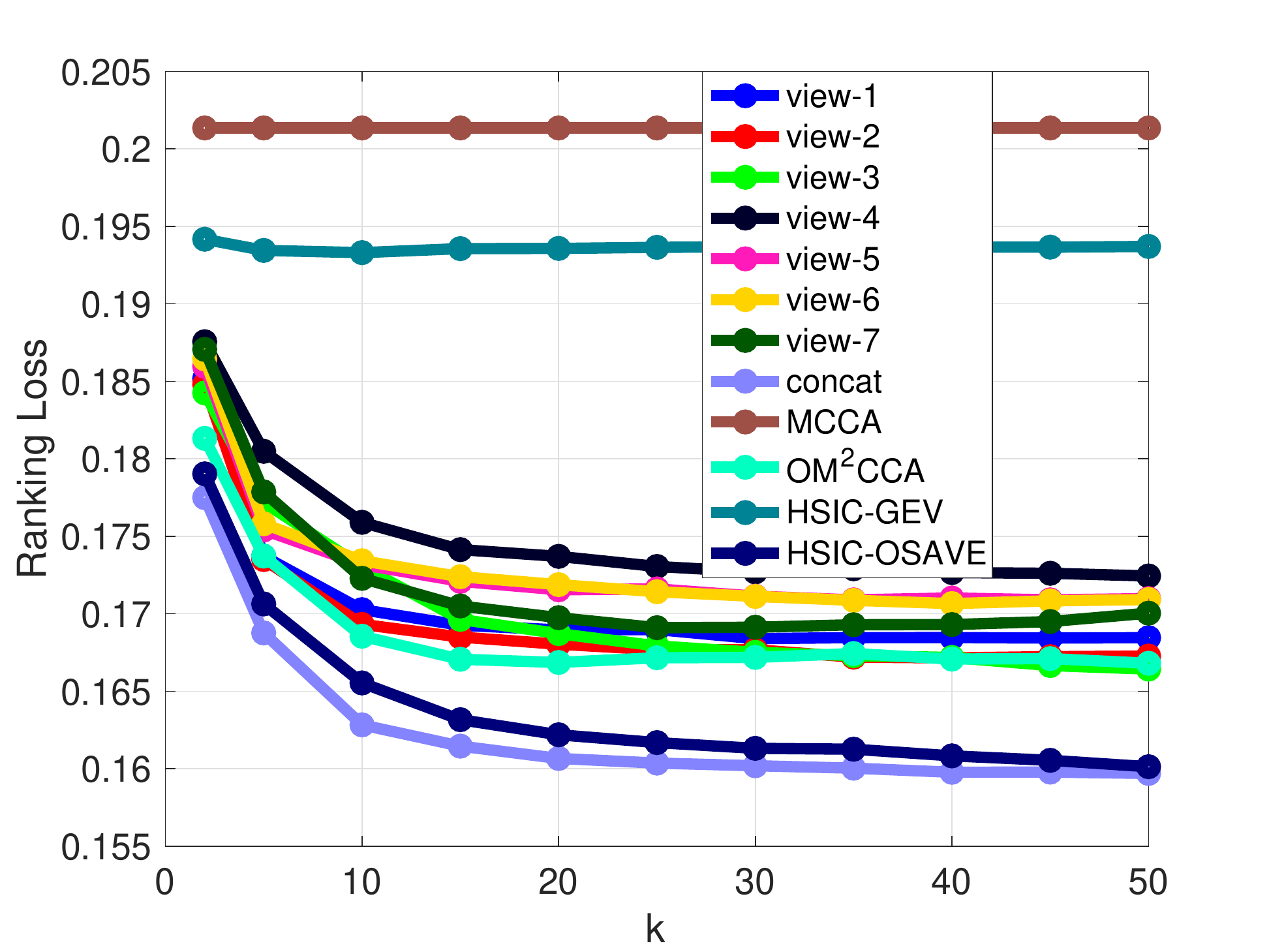} & 		\includegraphics[width=0.4\textwidth]{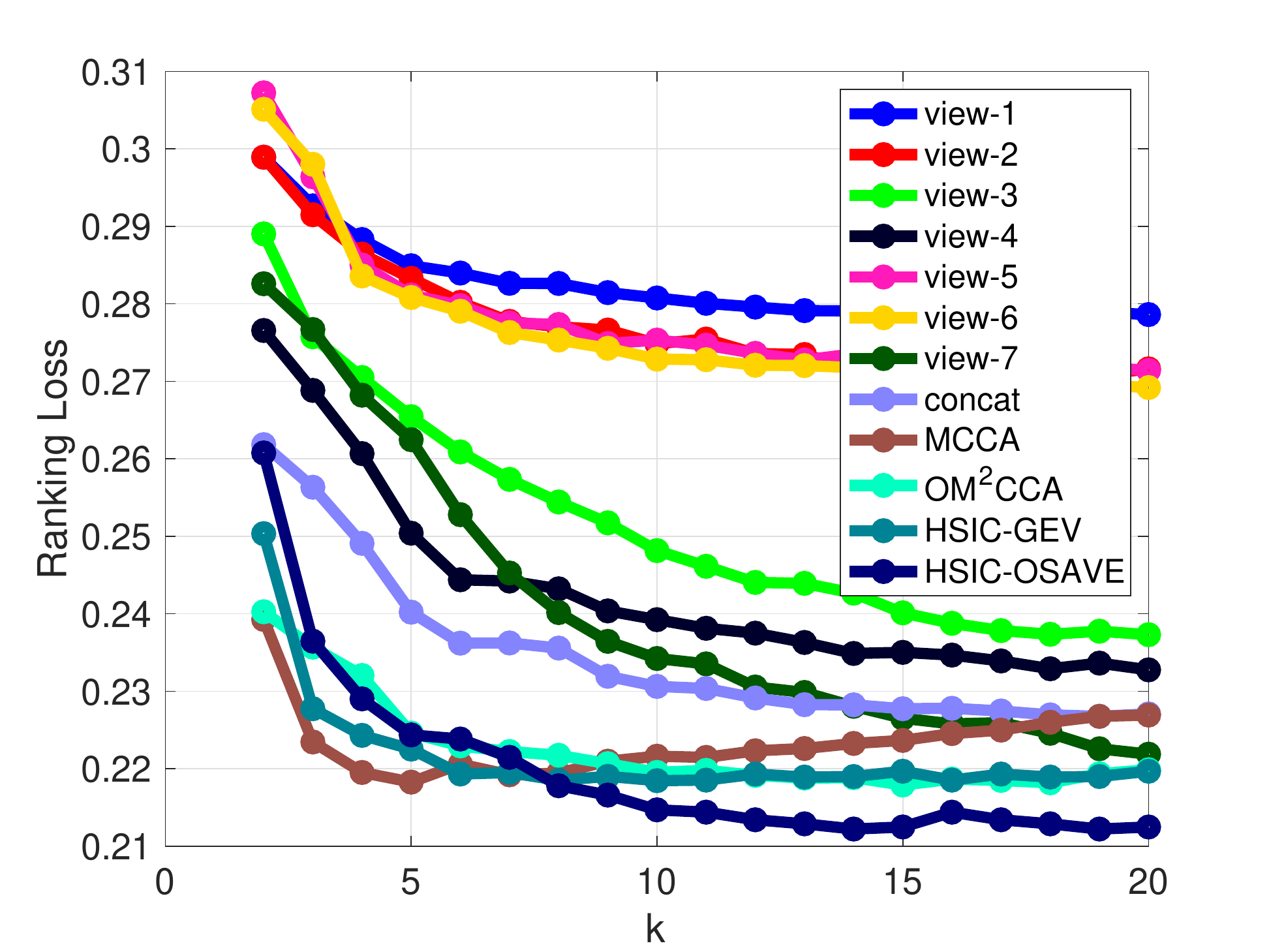}\\
		\includegraphics[width=0.4\textwidth]{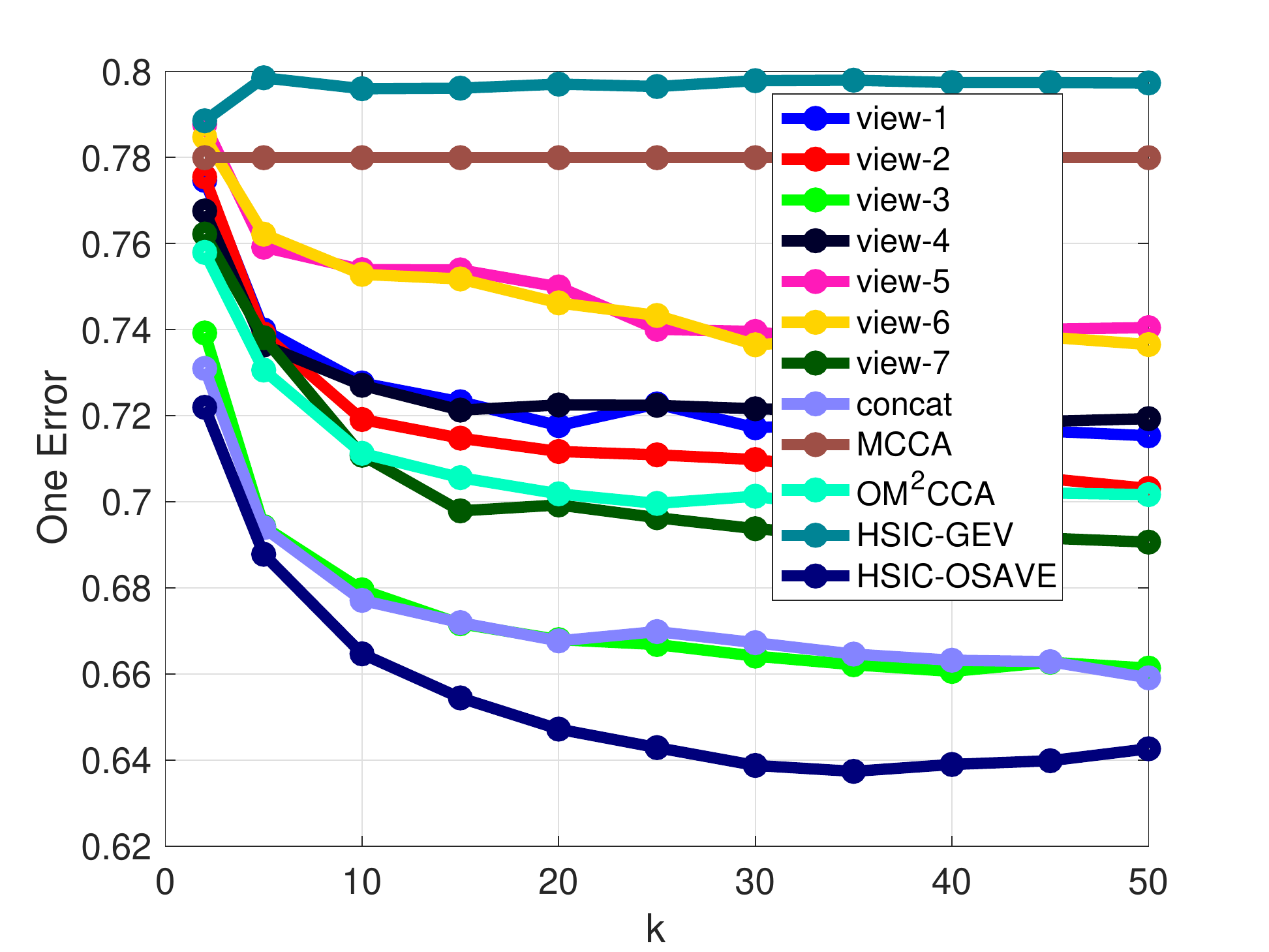} & 		
		\includegraphics[width=0.4\textwidth]{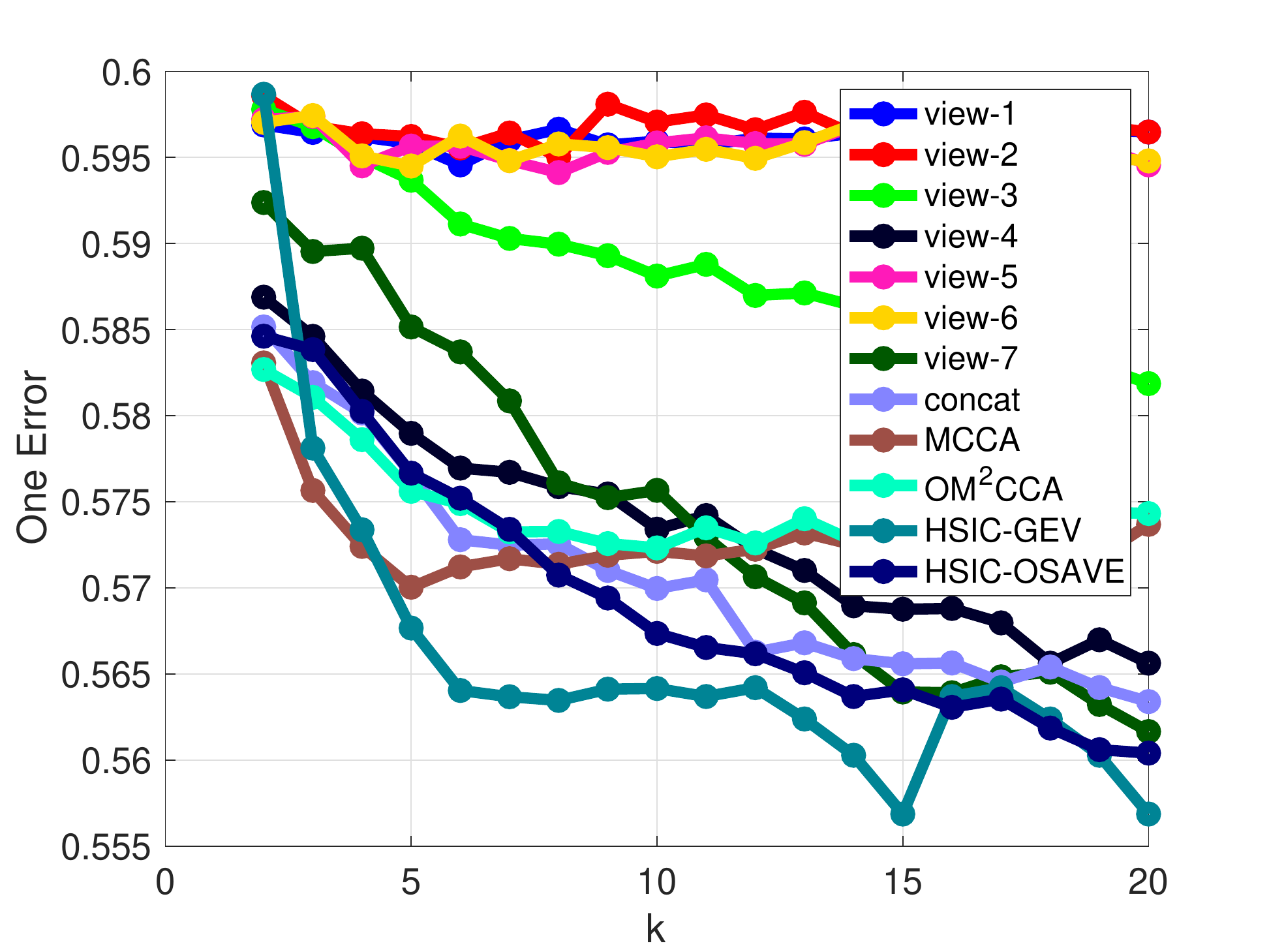}\\
		\includegraphics[width=0.4\textwidth]{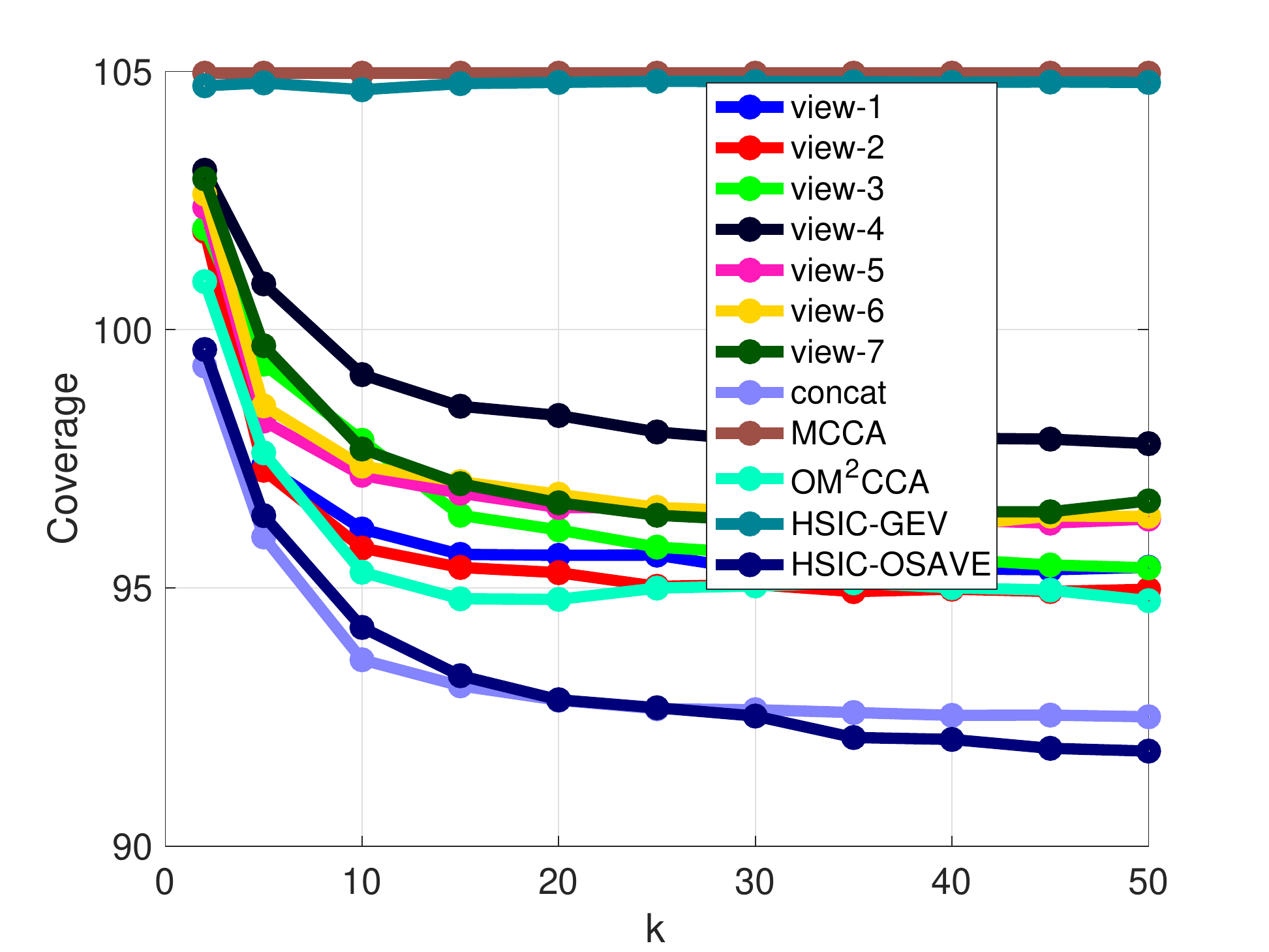} & 		\includegraphics[width=0.4\textwidth]{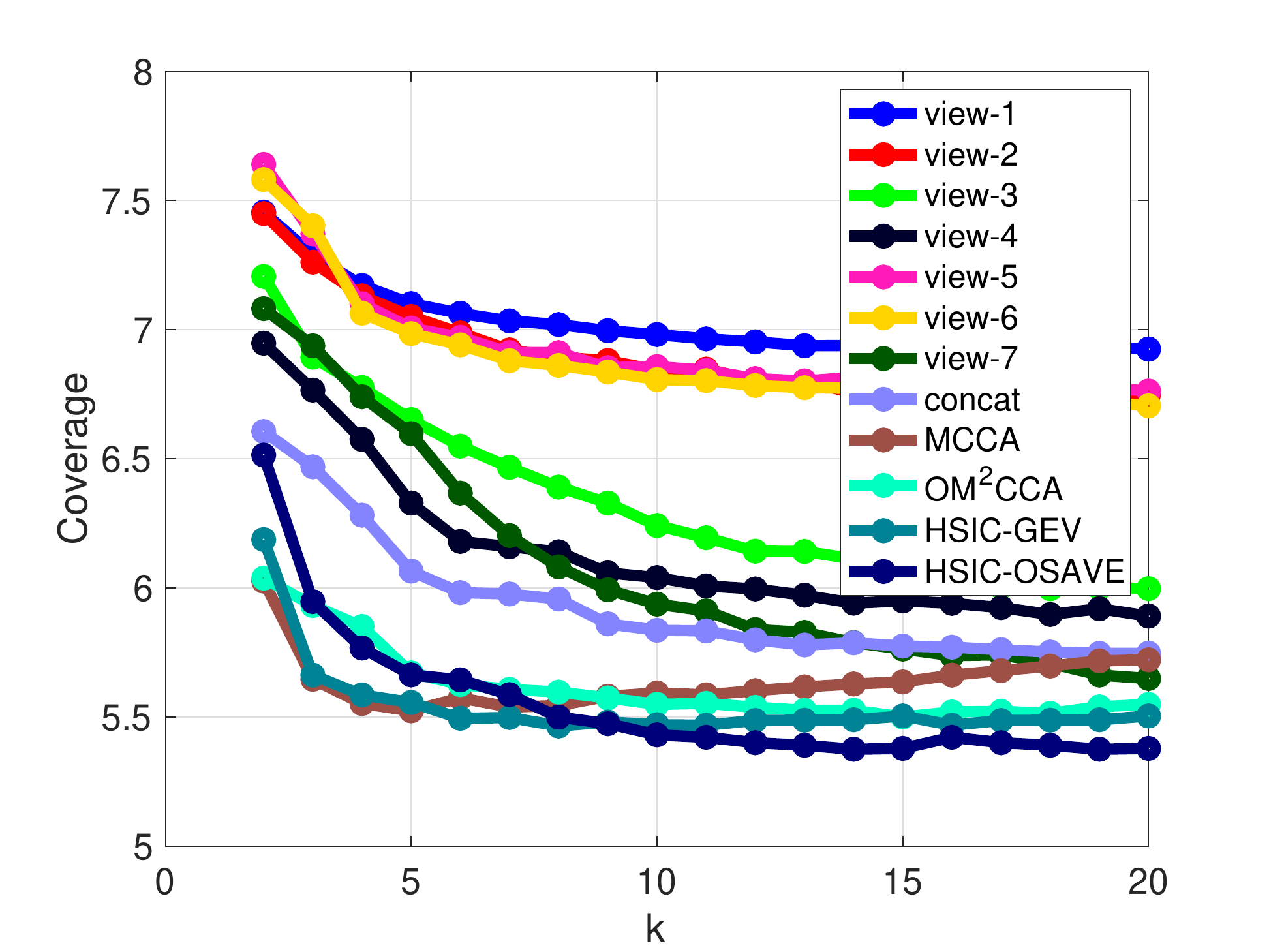}\\
		\includegraphics[width=0.4\textwidth]{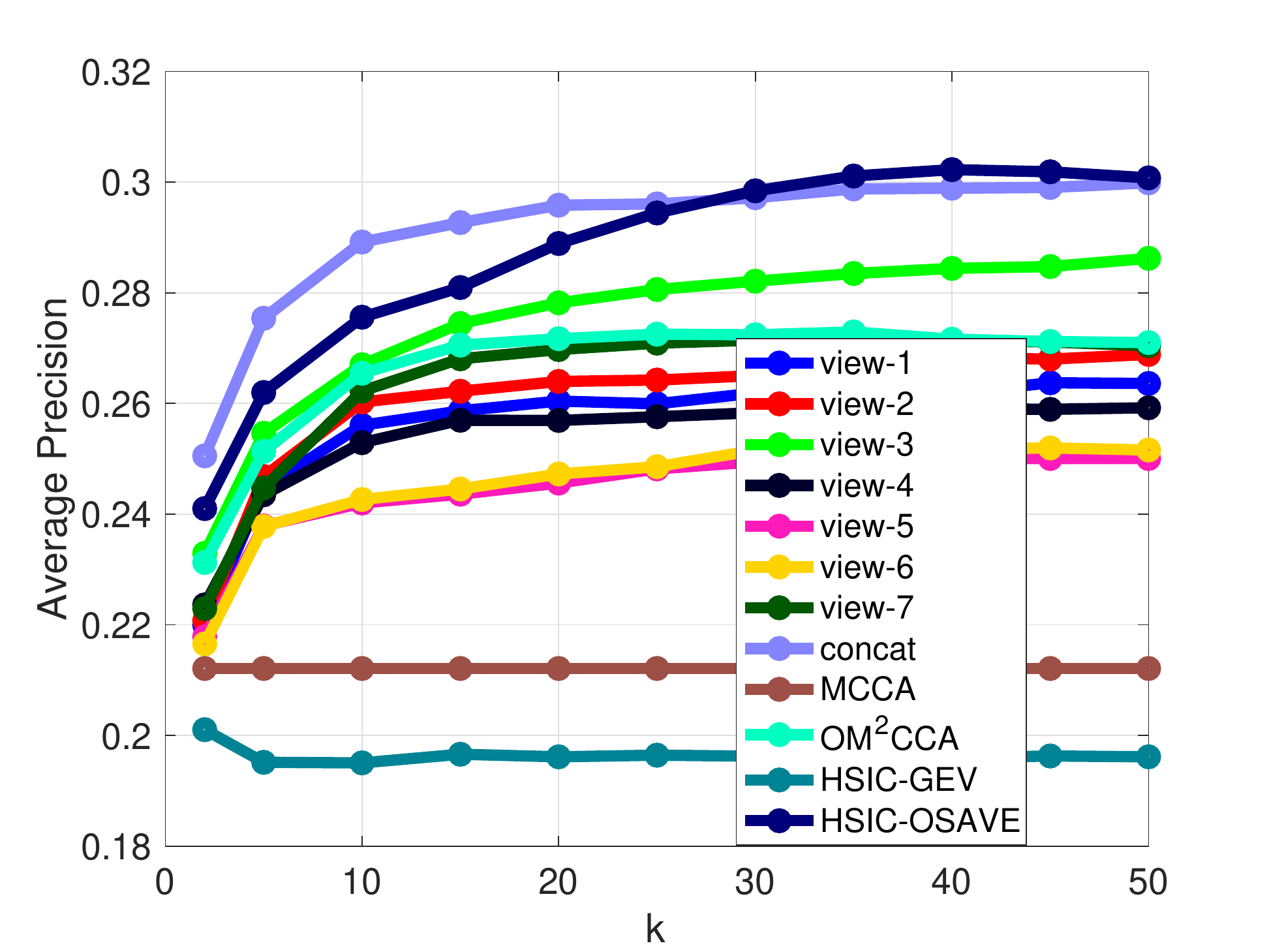} & 		
		\includegraphics[width=0.4\textwidth]{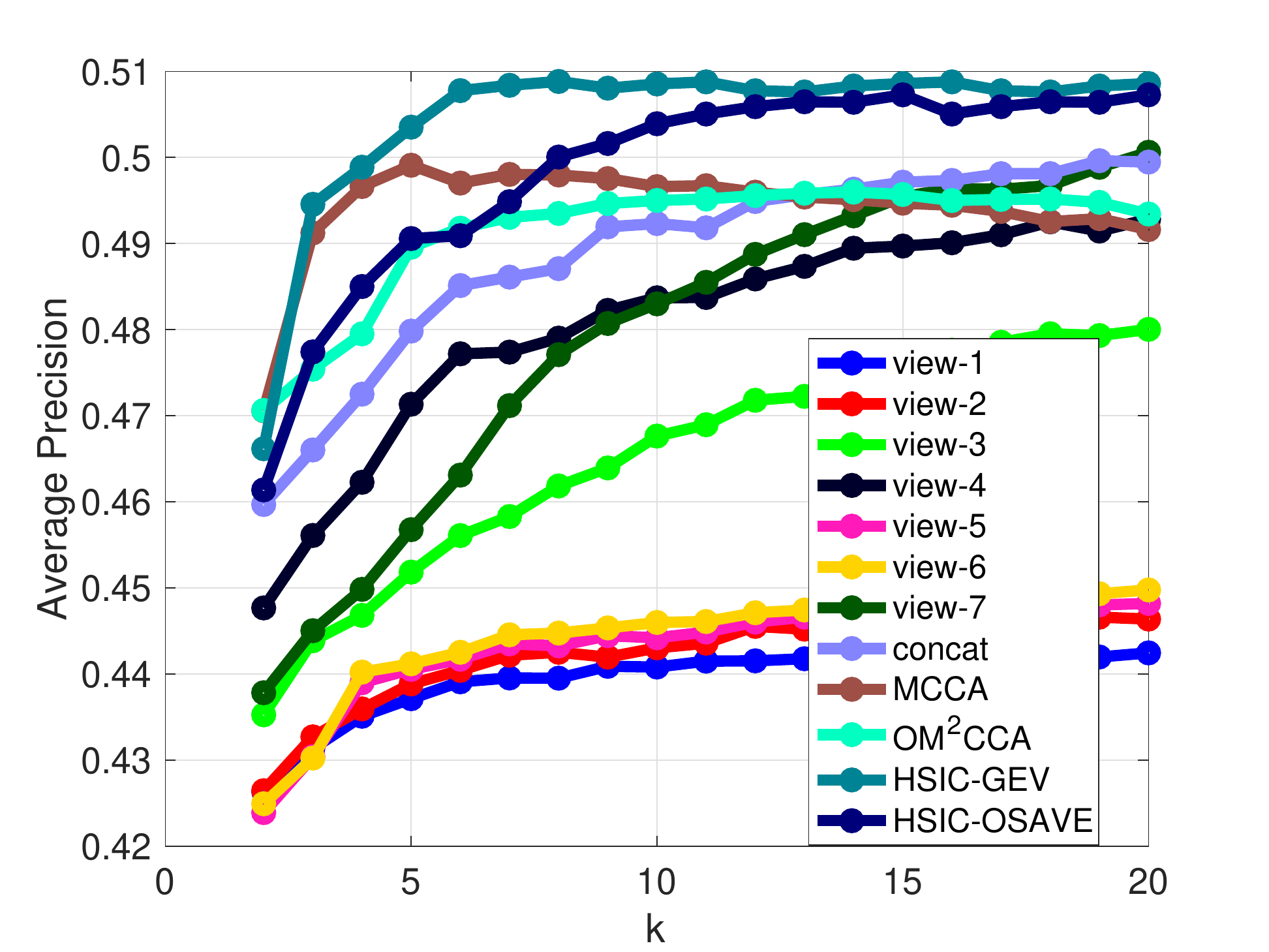}
	\end{tabular}
	\caption{Results with respect to five metrics by compared methods  on Corel5k (top row) and pascal07 (bottom row) over 10 random splits (10\% training and 90\%  testing), as  $k$ varies. } \label{fig:mvml-k}
\end{figure*}

\subsubsection{Datasets}

The statistics of six publicly available datasets are shown in Table~\ref{tab:mvml}, which are employed to evaluate the proposed methods for multi-view multi-label classification. Dataset emotions\footnote{http://mulan.sourceforge.net}  has two feature views: 8 rhythmic attributes and 64 timbre attributes. Corel5k \cite{duygulu2002object} is a benchmark dataset for keyword based image retrieval and image annotation. Dataset espgame \cite{makadia2008new} is obtained from an online game where two players gain points by agreeing on words describing the image. Dataset iaprtc12 \cite{makadia2008new} is a set of images accompanied with descriptions in several languages for cross-lingual retrieval. Both pascal07 \cite{everingham2010pascal} and mirflickr \cite{huiskes2008mir} are collected from the Flickr website. The last five datasets have been preprocessed with various feature descriptors and are  publicly available\footnote{http://lear.inrialpes.fr/people/guillaumin/data.php} \cite{guillaumin2009tagprop,guillaumin2010multimodal}. In our experiments, we choose $7$ descriptors: DenseHue (100), DenseHueV3H1 (300), DenseSift (1000), Gist (512), HarrisHue (100), HarrisHueV3H1 (300), and HarrisSift (1000).

\subsubsection{Compared Methods}
Multi-label classification \cite{tsoumakas2007multi} is a variant of the classification problem, where one instance
may have various numbers of labels from a set of predefined categories, i.e., a subset of labels. In addition, the multi-view multi-label classification data consists of multiple views as the input.
It is different from multi-view feature extraction in Section \ref{sec:mvfe-exp}, where each instance only has a single label.

Following \cite{wang2020orthogonal,Zhang2020Self}, we first use a multi-view subspace learning method as a supervised dimensionality reduction step for the purpose multi-view multi-label classification
so that the embeddings obtained by the method hopefully encode important correlations among multiple views and their output labels, and then  multi-label classification is conducted in the common space.
Hence, it is expected to have better performance for multi-view multi-label classification comparing with a single-view method
applied to each view only or to the naive concatenation approach. Specifically, we compare the following multi-view subspace learning approaches:
\begin{itemize}
	\item view-$s$: PCA on the $s$th view.
	\item concat: the concatenation of all views in the common space by PCA.
	\item MCCA \cite{nielsen2002multiset}: the output labels  considered as an additional view. Hence, there are $v+1$ views. The projection matrix for the output labels is learned but not used.
	\item HSIC-GEV: the proposed model solved as a generalized eigenvalue problem, which is similar to MLDA, but $\Phi_{s,s}$ is defined in (\ref{eq:hsic-1}) catering for multi-label outputs.
	\item OM$^2$CCA: the proposed model instantiated from (\ref{op:ouf}) for $v+1$ views with
	(\ref{eq:omcca}). Different from  \cite{Zhang2020Self}, all multiple views as  input are used.
	\item OHSIC: the proposed model instantiated from (\ref{op:ouf}) with
	(\ref{eq:hsic}).
\end{itemize}
After the projection matrices are learned,
we apply ML-kNN\footnote{http://lamda.nju.edu.cn/files/MLkNN.rar} in the common space as the backend multi-label classifier \cite{zhang2007ml}, which has demonstrated good performance over various datasets.

\subsubsection{Performance Evaluation}
Five widely-used metrics are used to measure the performance, including Hamming Loss, One Error, Ranking Loss, Coverage and Average Precision. Each evaluates the performance of a multi-label predictor from different aspects. Their concrete definitions can be found in \cite{zhang2013review}. In particular, the larger the Average Precision is, the better the performance, while for the other four metrics, the smaller the value the better the performance. Following \cite{zhang2007ml}, for each method we report the best results and their standard deviations over $10$ random training/testing splits in each of the five metrics.

Results by compared methods  are shown in Table~\ref{tab:mvml-metric}, in which the best results are reported by tuning $\alpha \in \{0.01, 0.1, 1, 10, 100\}$ and $k \in \{2, 5:5:50\}$ except for emotions, mirflickr and pascal07 (MCCA and OM$^2$CCA cannot have $k$ larger than the number of labels), over $10$ random splits of $10\%$ training and $90\%$ testing. From Table~\ref{tab:mvml-metric}, it can be observed that  (i) the joint subspace learning methods generally work better than PCA and the concatenation of individually projected views by PCA, (ii) the proposed HSIC-GEV and OHSIC consistently outperform others except in terms of Ranking Loss on emotions, Corel5k and espgame, and (iii) HSIC-GEV takes the top spots on iaprtc12 and mirflickr, while OHSIC takes most of the top spots on emotions, Corel5k and espgame. On pascal07, both approaches work equally well.

We further investigate the impact of parameter $k$ on each of the five metrics. Fig. \ref{fig:mvml-k} shows the trends of five metrics on Corel5k and pascal07 as $k$ varies. It is observed that a large $k$ generally leads to better performance for all methods, as it should be. Although Hamming Loss on Corel5k shows some fluctuation, the absolute difference is negligibly in the order of $10^{-5}$. In summary, HSIC-GEV and OHSIC can work consistently well over all tested $k$s.

\section{Conclusion}\label{sec:conclusions}
In this paper, we start by proposing a unified multi-view subspace learning framework, which aims to learn a set of orthogonal projections for desirable advantages such as more noise-tolerant, better suited for data visualization and distance preservation.
The proposed framework can be easily extended for single-view and multi-view learning in the settings of both unsupervised and supervised learning.
An efficient successive approximations via eigenvectors method (OSAVE) is designed to approximately solve the  optimization problem resulted from the proposed framework.
It is built upon well developed numerical linear algebra technique and can handle large scale datasets.
To verify the capability of the proposed framework and the approximate optimization method, we showcases six new models for two learning tasks. Experimental results  on various real-world datasets demonstrate that our proposed models solved
by our successive approximation method OSAVE perform competitively to and often better than the baselines.

\end{document}